\documentclass[final]{elsarticle}

\makeatletter
\def\ps@pprintTitle{%
  \let\@oddhead\@empty
  \let\@evenhead\@empty
  \def\@oddfoot{\centerline{\thepage}}
  \let\@evenfoot\@oddfoot}
\makeatother
\usepackage{microtype}
\usepackage{graphicx}
\usepackage{subfigure}
\usepackage{booktabs} % for professional tables
\usepackage{epstopdf}
\usepackage{color}
\usepackage{amssymb,amsmath,amsthm, amsfonts}
% hyperref makes hyperlinks in the resulting PDF.
% If your build breaks (sometimes temporarily if a hyperlink spans a page)
% please comment out the following usepackage line and replace
% \usepackage{icml2019} with \usepackage[nohyperref]{icml2019} above.
\usepackage{hyperref}
\usepackage{tabu,multirow}
\usepackage{array}
\usepackage{pgffor}
\RequirePackage{algorithm}
\RequirePackage{algorithmic}
\RequirePackage{natbib}

\graphicspath{{figure/}}
\DeclareGraphicsExtensions{.eps}

\def \E {\mathop{{\mathbb{E}}}\limits}

\def \N {\mathcal{N}}
\def \L {\mathcal{L}}

\def \R {\mathbb{R}}

% vector
\def \zb {\mathbf{z}}
\def \xb {\mathbf{x}}
\def \yb {\mathbf{y}}
\def \Zb {\mathbf{Z}}
\def \Xb {\mathbf{X}}
\def \Yb {\mathbf{Y}}
\def \encb {\mathbf{enc}}
\def \decb {\mathbf{dec}}
\def \preb {\mathbf{pre}}

\newtheorem{definition}{ Definition}   %%[section]
\newtheorem{theorem}{ Theorem}   %%[section]
   %%[section]
\newtheorem{corollary}{ Corollary}

\usepackage{times}  %Required
\usepackage{helvet}  %Required
\usepackage{courier}  %Required
\usepackage{url}  %Required
\usepackage{graphicx}  %Required
\usepackage{hyperref}
%\usepackage{lineno,hyperref}
%\modulolinenumbers[5]

%\journal{Journal of Pattern Recognition}

%%%%%%%%%%%%%%%%%%%%%%%
%% Elsevier bibliography styles
%%%%%%%%%%%%%%%%%%%%%%%
%% To change the style, put a % in front of the second line of the current style and
%% remove the % from the second line of the style you would like to use.
%%%%%%%%%%%%%%%%%%%%%%%

%% Numbered
%\bibliographystyle{model1-num-names}

%% Numbered without titles
%\bibliographystyle{model1a-num-names}

%% Harvard
%\bibliographystyle{model2-names.bst}\biboptions{authoryear}

%% Vancouver numbered
%\usepackage{numcompress}\bibliographystyle{model3-num-names}

%% Vancouver name/year
%\usepackage{numcompress}\bibliographystyle{model4-names}\biboptions{authoryear}

%% APA style
%\bibliographystyle{model5-names}\biboptions{authoryear}

%% AMA style
%\usepackage{numcompress}\bibliographystyle{model6-num-names}

%% `Elsevier LaTeX' style
\bibliographystyle{elsarticle-num}
%%%%%%%%%%%%%%%%%%%%%%%

\begin{document}

\begin{frontmatter}

\title{Discovering Influential Factors in Variational Autoencoders}
%\tnotetext[mytitlenote]{Fully documented templates are available in the elsarticle package on \href{http://www.ctan.org/tex-archive/macros/latex/contrib/elsarticle}{CTAN}.}

%% Group authors per affiliation:
\author[xi]{Shiqi Liu\fnref{myfootnote}}
\author[br]{Jingxin Liu\fnref{myfootnote}}
\author[xi]{Qian Zhao}
\author[xi]{Xiangyong Cao}
\author[xi]{ Huibin Li}
\author[xi]{ Deyu Meng}%\corref{mycorrespondingauthor}}
%\ead{deyumeng@mail.xjtu.edu.cn}
\author[br]{Hongying Meng}
\author[ny]{Sheng Liu}
\fntext[myfootnote]{Shiqi Liu and Jingxin Liu made equal contributions to this work.}
%% or include affiliations in footnotes:
%\cortext[mycorrespondingauthor]{Corresponding author}
\address[xi]{Xi'an Jiaotong University, Xi'an, Shaan'xi Province, P. R. China}
\address[br]{Brunel University London, London, United Kingdom}
\address[ny]{State University of New York at Buffalo, NY 14214, United States}

\begin{abstract}
In the field of machine learning, it is still a critical issue to identify and supervise the learned representation without manually intervening or intuition assistance to extract useful knowledge or serve for the downstream tasks. In this work, we focus on supervising the influential factors extracted by the variational autoencoder(VAE). The VAE is proposed to learn independent low dimension representation while facing the problem that sometimes pre-set factors are ignored. We argue that the mutual information of the input and each learned factor of the representation plays a necessary indicator of discovering the influential factors. We find the VAE objective inclines to induce mutual information sparsity in factor dimension over the data intrinsic dimension and therefore result in some non-influential factors whose function on data reconstruction could be ignored. We show mutual information also influences the lower bound of the VAE's reconstruction error and downstream classification task. To make such indicator applicable, we design an algorithm for calculating the mutual information for the VAE and prove its consistency. Experimental results on MNIST, CelebA and DEAP datasets show that mutual information can help determine influential factors, of which some are interpretable and can be used to further generation and classification tasks, and help discover the variant that connects with emotion on DEAP dataset.\end{abstract}

\begin{keyword}
Variational Autoencoder\sep Mutual Information\sep Generative Model
\end{keyword}

\end{frontmatter}
\newpage

\section{Introduction}
Learning efficient low dimension representation of data is important in machine learning and related applications. Efficient and intrinsic low dimension representation is helpful to exploit the underlying knowledge of data and serves for latter tasks including generation, classification and association. Early linear dimension reduction (Principle Component Analysis (\cite{bishop2006pattern},\cite{zhao2014robust})) has been widely used in primary data analysis and its variant has been applied in face identification  (\cite{yang2004two}) and classical linear independent representation (Independent Component Analysis  (\cite{hyvarinen2004independent},\cite{hyvarinen1999nonlinear}) have been used in  blind source separation (\cite{jung2000removing}) and EEG signal processing   (\cite{makeig1996independent}). Nonlinear dimension reduction (e.g. Autoencoder  \cite{goodfellow2016deep},\cite{vincent2010stacked},\cite{fan2019autoencoder}) begins to further learn abstract representation\cite{bhatt2019representation},\cite{zhang2019depth} and has been used in semantic hashing  (\cite{salakhutdinov2009semantic} and many other tasks(\cite{lore2017llnet},\cite{liu2019stacked},\cite{hou2019sparse}). Recently, a new technique, called variational autoencoder (\cite{kingma2013auto,rezende2014stochastic}) has attracted much attention of researchers, due to its capability in extracting nonlinear independent representation. The method can further  model causal relationship, represent disentangled visual  variants (\cite{mathieu2016disentangling},\cite{larsen2015autoencoding},\cite{higgins2016early})\footnote{According to \cite{locatello2018challenging}, although the method proves effective at making the disentangled factors not correlated, the learned disentangled factors are still correlated with each other.}
 and interpretable time series variants  (\cite{hsu2017unsupervised},\cite{denton2017unsupervised}) and this method can serve for
generating signals with abundant diversities
in a ``factor-controllable" way  (\cite{suzuki2016joint},\cite{higgins2017scan},\cite{hu2017toward}). The related techniques enable the knowledge transferring through
shared factors among different tasks (\cite{higgins2017darla}).

However, the usage of the VAE on extracting factors are unclear and we lack efficient methodologies to quantify the influence of each learned factor on data representation. In application,  sometimes some  pre-set factors remain unused \footnote{ Montage (D) in Fig.(\ref{fig:Iencoderzhsigma2_argumented}) is a typical traversal of the unused factor. } (\cite{goodfellow2016deep},\cite{van2017neural}), and the relation between the learned factors and original data has to be discovered by manually  intervention (visual or  aural observation). This leads to the waste on extra factors and hinders the factor selection for the subsequent tasks such as generating meaningful image/audio. Besides, some classical influence determination methods including estimating the variance of each factor lose its utility on the VAE. Therefore,  identifying and monitoring the influential factor of the VAE  becomes a critical issue along this line of research.

\begin{figure*}[t]
  \centering
  \includegraphics[width=\linewidth,clip=true]{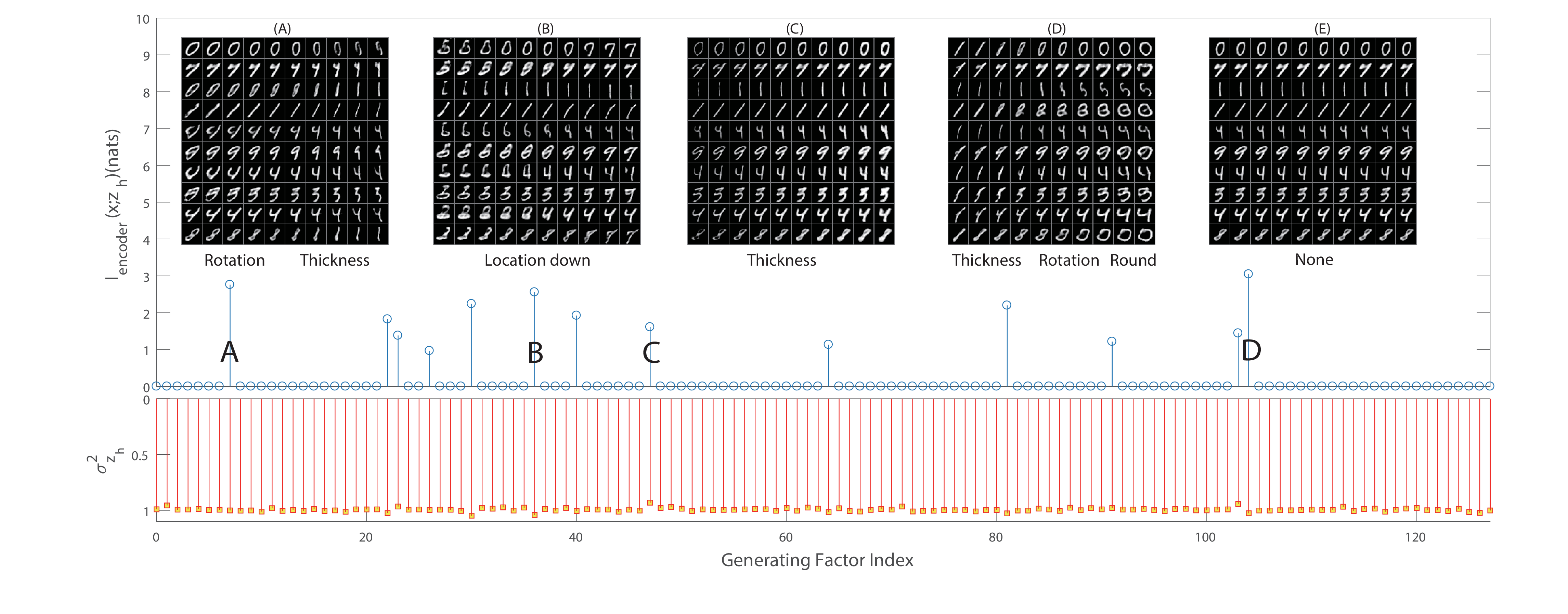}
  \caption{\textbf{Estimated $I(\Xb;{Z_{\encb}}_h)$ determines the influential factors; } $I(\Xb;{Z_{\encb}}_h)$, $\sigma^2_{z_h}$ and qualitatively influential factor traversals of $\beta(=10)$-VAE on MNIST.
  The top pulse subgraph:  $I(\Xb;{Z_{\encb}}_h)$ of each factor.
  The bottom reverse pulse subgraph:  the estimated variance $\sigma^2_{z_h}$ of each factor.
  The A,B,C montages: influential factor traversals corresponding to factor A,B,C noted in the pulse graph and  the whole influential factor traversals are listed in Fig.(\ref{fig:traservel of beta10MNIST}) in Appendix~\ref{Experiment Details}. The montages D is the traversal of ignored factors with little estimated mutual information. According to the four montages, the variance can't determine the influential factors as mutual information indicator does. }\label{fig:Iencoderzhsigma2_argumented}
\end{figure*}

In order to efficiently  determine and supervise the learned factors, this paper has made the following efforts.

\begin{itemize}
  \item We first adopt mutual information as the quantitative indicator of assessing the influence of each factor on data representation in the VAE model. Besides, in order to analyze the rationality of this indicator, we theoretically prove that how mutual information influence the lower bound of the VAE's reconstruction error and subsequent classification task.
  \item We propose an estimation algorithms to calculate the mutual information for all the factors of the VAE, and then we prove its consistency.
  \item  We substantiate the effectiveness of the proposed indicator by  experiments on MNIST  (\cite{L1998Gradient}), CelebA  (\cite{liu2015deep})  and DEAP  (\cite{koelstra2012DEAP}). Especially, some discovered factors by the proposed indicator are found meaningful and interpretable for data representation  and other left ones are generally ignorable for the task. The capability of the selected factors on generalization and classification tasks  are also verified.
      \end{itemize}

This paper is organized as the following. We introduce the VAE model for generation and classification in Section 2. We argue the necessity of mutual information as a indicator in Section 3. Specifically, we introduce the mutual information of input data and factors, analyze the cause through the perspective of mutual information and data intrinsic dimension, discuss the relationship of  mutual information and recover as well as the classification and propose the estimator and prove its consistency. We review the related work on supervising the factors of the VAE in Section 4. The experiments are in Section 5.

Throughout the paper, we denote a random variable in upper case, e.g., $Z$; a random vector in bold upper case, e.g., $\Zb$, whose $h^{th}$ component is denoted as $Z_h$; a general vector as bold lower case, e.g., $\zb$. More notations appearing in the following contents are listed in Table \ref{tab:Margin_settings} for easy reference.

    \begin{table}[h]
		\centering
		\begin{tabular}{ l l }\hline
			Notation &Explanation\\\hline
$\Xb$ & random variables representing the data\\\hline
$\Yb$ & random variables representing the data label\\\hline
$\Zb$ & random variables with $p_{\decb}(\zb)=\N(\zb|\textbf{0},I_H)$\\\hline
$ \Zb_{\encb}$& random variables with $q_\encb(\zb)=\int_{}^{}q_\encb(\zb|\xb)p_{data}(\xb)d\xb$\\\hline
${Z_{\encb}}_h$& a random variable with index $h$ of $ \Zb_{\encb}$\\\hline
$ \Yb_{\preb}$& random variables with $p_\preb(\yb)=\int_{}^{}p_\preb(\yb|\zb)q_\encb(\zb|\xb) p_{data}(\xb)d\xb$\\\hline
$\hat \Xb(\Zb_{\encb})$& a function named $\hat \Xb$ of $\Zb_{\encb}$, the  estimator of $\Xb$   \\\hline
$H(\Xb)$ &  $\E_{\xb\sim p(\xb)}-\log p(\xb)$\\\hline
$I(\Xb;\Zb)$ &   $\E_{\xb\sim p(\xb)}D_{KL}(p(\zb|\xb)||p(\zb))$
 \\\hline
$\Xb_{rec}$ & $\decb_\mu(\Zb_{\encb})$ where $\decb_\mu$ is such that $ p_\decb(\xb|\zb)=\N(\xb|\decb_\mu(\zb),\decb_\sigma)$\\\hline			
$\Zb_{major}$& a major set of $\Zb_{\encb}$ where $\Zb_{\encb}= [\Zb_{major},\Zb_{minor}]$\\\hline
$\Zb_{minor}$& a minor set of $\Zb_{\encb}$ where $\Zb_{\encb}= [\Zb_{major},\Zb_{minor}]$\\\hline
$ \Xb_{recc}$&$\decb_\mu(\Zb_{major}, \mathbf{0})$\\\hline
$ \hat \Yb$& a random variable, the estimator of $\Yb$\\\hline

		\end{tabular}
		\caption{Notation Table}
		\label{tab:Margin_settings}
	\end{table}
\section{VAE model}
The VAE (\cite{kingma2013auto}, \cite{rezende2014stochastic}) is a scalable unsupervised representation learning model (\cite{higgins2016beta}):
the VAE assumes that input $\Xb$ is generated by several independent Gaussian random variables $\Zb$, that is $p_{\decb}(\zb)=\N(\zb|\textbf{0},I_H)$. Since Gaussian distribution can be continuously and reversibly mapping to many other distributions, the theoretical analysis on it might be also instructive for other continuous-latent VAEs.
The generating/decoding   process is modeled as $p_{\decb}(\xb|\zb)$ and the inference/encoding process $q_\encb(\zb|\xb)=\N(\zb |${\boldmath$\mu$}$(\xb),diag(\sigma_1(\xb),\cdots,\sigma_H(\xb)))$ is treated as the approximate posterior distribution. Note that it yields that  $q_\encb(\zb|\xb)=q_\encb(z_1|\xb)\cdots q_\encb(z_H|\xb)$. We assume both of them are parameterized by the neural network with parameter $\encb$ and $\decb$.

\textbf{Factor:} Let $\Zb_{\encb}$ denote random variables with $q_\encb(\zb)=\int_{}^{}q_\encb(\zb|\xb)p_{data}(\xb)d\xb$
and a factor in the latter literature refers to a dimension  of $\Zb_{\encb}$.
\subsection{Generation}
In the VAE setting, the approximate inference method is applied to
 maximizing the variational lower bound   of $\log p_{\decb}(\xb)=\log\int p_{\decb}(\xb|\zb) p_{\decb}(\zb)d\zb  $,
\begin{eqnarray}\label{Approximate Inference}
\L_{rec}&=&\E\limits_{\zb\sim q_\encb(\zb|\xb)}\log p_{\decb}(\xb|\zb)-\nonumber \\
&& D_{KL}(q_\encb(\zb|\xb)||p_{\decb}(\zb))\nonumber
\\ &\le& \log p_{\decb}(\xb),
  \end{eqnarray}
    with the equality holds iff
    \begin{equation}\label{Equality Condition}
D_{KL}(q_\encb(\zb|\xb)||p_{\decb}(\zb|\xb))=0.
  \end{equation}

In order to limit the information channel capacity (\cite{higgins2016beta}), $\beta$-VAE introduces $\beta>1$ to the second term of the objective,
\begin{eqnarray}\label{beta-Approximate Inference}
\L_{rec-\beta}&=&\E\limits_{\zb\sim q_\encb(\zb|\xb)}\log p_{\decb}(\zb|\xb)- \nonumber\\ &&\ \ \ \ \ \ \ \ \ \ \beta D_{KL}(q_\encb(\zb|\xb)||p_{\decb}(\zb))\nonumber\\ &<& \log p_{\decb}(\xb).
  \end{eqnarray}

After training the objective, by sampling from the $p_{\decb}(\zb)=\N(\zb|\textbf{0},I_H)$ or setting $\zb$ with purpose, the learned $p_{\decb}(\xb|\zb)$ can generate new samples.

    \subsection{Classification}
The  $q_\encb(\zb|\xb)$ can further support latter tasks such as classification. Let $p_{\preb}(\yb|\zb)$ denote the predicting process, and the classification objective is the following,
  \begin{equation}\label{Approximate Inference for classification}
\L_{pre}=\E\limits_{\zb\sim q_\encb(\zb|\xb)}\log p_{\preb}(\yb|\zb).
  \end{equation}
% when joint optimizing the two objective and let  $p_{\decb,\preb}(\xb,\yb)=\int p_{\decb}(\xb|\zb)p_{\preb}(\yb|\zb) p_{\decb}(\zb)d\zb$,
%  \begin{equation}\label{Approximate Inference for joint optimization}
%\L_{pre}+\L_{rec}\le\log p_{\decb,\preb}(\xb,\yb),
%  \end{equation}
%  with equality holds iff
%    \begin{equation}\label{Equality Condition}
%D_{KL}(q_\encb(\zb|\xb)||p_{\decb,\preb}(\zb|\xb,\yb))=0.
%  \end{equation}
Let $\Yb_{\preb}$ denote the random variables with $p_\preb(\yb)=\int_{}^{}p_\preb(\yb|\zb)q_\encb(\zb|\xb) p_{data}(\xb)d\xb$.

In real implementation the above objectives should further take expectation on the data distribution. However, sometimes only part factors are manually found useful for the generation (\cite{goodfellow2016deep}), and the factor which is irrelevant to $\xb$ can not support classification either. Therefore, some approaches to automatically find the influential factor beneficial to the latter tasks are demanded.

\section{Mutual Information as A Necessary Indicator}
By exploring why factors are ignored, we argue that mutual information is a necessary indicator to find the influential factor.
\subsection{Ignored Factor Analysis}

\subsubsection{Low Intrinsic Dimension of Data}
One aim of the VAE is to learn the data intrinsic factors but intrinsic dimension keeps the same under the continuous reversible mapping suggested by Theorem~\ref{Information Conservation}.

\begin{theorem}[Information Conservation]\label{Information Conservation} Suppose that there are two sets of $H$ and $P$ ($H\neq P$), $\Zb=(Z_1,\cdots,Z_H)$ and $\Yb=(Y_1,\cdots,Y_P)$, respectively, independent unit Gaussian random variables, then these two sets of random variables can not be the generating factor of each other. That is, there are no continuous functions $f:\R^H \rightarrow \R^P$ and $g:\R^P \rightarrow \R^H$ such that
$$\Zb=g(\Yb)\quad and\quad \Yb=f(\Zb).$$
\end{theorem}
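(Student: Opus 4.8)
The plan is to reduce the statement to the topological fact that $\R^H$ and $\R^P$ cannot be homeomorphic when $H\neq P$ (Brouwer's invariance of dimension). I would argue by contradiction: suppose continuous $f:\R^H\to\R^P$ and $g:\R^P\to\R^H$ exist with $\yb=f(\zb)$ and $\zb=g(\yb)$ holding as almost-sure identities of the random vectors on a common probability space. Substituting one relation into the other yields $\zb=g(f(\zb))$ almost surely, i.e. the composite $g\circ f$ fixes $\zb$ with probability one, and symmetrically $f\circ g$ fixes $\yb$.

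The key step is to upgrade these almost-sure fixed-point properties to everywhere identities, and this is the only place the Gaussian assumption is used. Since $\zb\sim\N(\textbf{0},I_H)$ has a strictly positive density on all of $\R^H$, the set $S=\{z\in\R^H : g(f(z))=z\}$ has full Lebesgue measure; its complement is null, hence has empty interior, so $S$ is dense. Because $g\circ f$ and the identity map are continuous, $S$ is also closed, and a closed dense subset of $\R^H$ is the whole space, whence $g\circ f=\mathrm{id}_{\R^H}$. The same argument applied to $\yb\sim\N(\textbf{0},I_P)$ gives $f\circ g=\mathrm{id}_{\R^P}$.

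Consequently $f$ and $g$ are mutually inverse continuous bijections, so $f$ is a homeomorphism of $\R^H$ onto $\R^P$; invariance of dimension then forces $H=P$, contradicting the hypothesis $H\neq P$. If a more self-contained version is preferred, I would instead restrict $g$ to a closed ball $\bar B\subset\R^P$: by compactness $g|_{\bar B}$ is a homeomorphism onto its image, so a $P$-dimensional cube embeds topologically in $\R^H$, forcing $P\le H$ by monotonicity of covering dimension, and the symmetric embedding forces $H\le P$.

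The main obstacle is not the logical structure but the invariance-of-dimension input itself, which is a genuinely nontrivial topological theorem that must either be cited or replaced by the dimension-theoretic embedding argument above. The only subtlety on the probabilistic side is interpreting the hypotheses $\zb=g(\yb)$ and $\yb=f(\zb)$ precisely as almost-sure equalities of random variables sharing a probability space, since this is exactly what licenses both the substitution that produces $g\circ f(\zb)=\zb$ and the full-support passage from almost-everywhere to everywhere equality.
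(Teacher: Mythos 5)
Your proof is correct and follows essentially the same route as the paper's: assume $f$ and $g$ exist, deduce $g\circ f=\mathrm{id}_{\R^H}$ and $f\circ g=\mathrm{id}_{\R^P}$, conclude that $\R^H$ and $\R^P$ are homeomorphic, and contradict invariance of dimension. Your treatment is in fact more careful than the paper's, which simply asserts the composite identities ``$\forall \zb\in \R^H$'' without the full-support/closed-dense argument you supply to upgrade the almost-sure equalities of random vectors to everywhere identities of the maps.
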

\begin{proof}\label{pf:Information Conservation}
   Proof by Contradiction. Suppose those two function exist, and we will show that they will be  inverse mapping of each other and  $f$ is a homeomorphism mapping of $\R^H$ and $\R^P$. Here, $f$ is said to be a homeomorphism mapping if it satisfies the following three conditions:
  \begin{itemize}
    \item  $f$ is a bijection,
    \item  $f$ is continuous,
    \item   the inverse function $f^{-1}$ is continuous.
  \end{itemize}
  Since $\R^H$ and $\R^P$ have different topology structures ($P\neq H$), the homeomorphism mapping will not exist.

\begin{center}
  $\Zb=g(\Yb)=g(f(\Zb))$ $\forall \Zb\in \R^H$ $\Rightarrow$ $g\circ f=I_H$
\end{center}
\begin{center}
  $\Yb=f(\Zb)=f(g(\Yb))$ $\forall \Yb\in \R^P$ $\Rightarrow$ $f\circ g=I_P$
\end{center}
It yields $g$ is the inverse function of $f$ and $f$ is bijection.
Since both $f$ and $g$ are continuous, $f$ is a homeomorphism mapping between $\R^H$ and $\R^P$ and it leads to the contradiction.
\end{proof}
%Proof is listed in Appendix~\ref{pf:Information Conservation}.
 Suppose the oracle data, denoted by random variable $\Xb$, is generated by  $\Yb$ (with $P$ independent unit Gaussian random variables)
with a homeomorphism mapping $\Xb=\phi(\Yb)$. Factors $\Zb$ (with $H$ independent unit Gaussian random variables) generates
the $\Xb$ with a homeomorphism mapping $\Xb=\psi(\Zb)$. It yields $\Zb=\psi^{-1} \circ \phi (\Yb)$ and $\Yb=\phi^{-1}\circ \psi(\Zb)$.
Then according to the information conservation theorem, it must hold that $H=P$.

%\begin{center}
%  \begin{figure*}[t]
%  \centering
%  \includegraphics[width=0.7\linewidth, clip=true, trim= 0 50 0 60 ]{generatortraversal/NoiseModelling/InformationConservation.eps}
%  \caption{\textbf{Intrinsic Dimension Keeps the Same Under the Continuous Reversible Mapping.} The illustration of the information conservation theorem~\ref{Information Conservation}. Suppose that the oracle data, denoted by random variable $\xb$, is generated by  $\yb$ (with $P$ independent unit Gaussian random variables)
%with a homeomorphism mapping $\xb=\phi(\yb)$. Factors $\zb$ (with $H$ independent unit Gaussian random variables) generates
%the $\xb$ with a homeomorphism mapping $\xb=\psi(\zb)$. It yields $\zb=\psi^{-1} \circ \phi (\yb)$ and $\yb=\phi^{-1}\circ \psi(\zb)$.
%Then according to the information conservation theorem, it must hold that $H=P$.}
%   \label{fig:Information Conservation}
%\end{figure*}
%\end{center}
%杩欎釜瀹氱悊濡傛灉涓嶅鏄犲皠鐨勫舰寮忚繘琛岄檺鍒剁殑璇濓紝閭ｄ箞鐩稿簲鐨勫嚱鏁颁篃璁哥湡鐨勫瓨鍦?
%%鑳藉惁璇存槑f鏄竴鑷磋繛缁殑鏃跺€欏氨宸茬粡涓嶈兘瀹炵幇浠庝綆鍒伴珮鐨勬槧灏勪簡鍛紵

For example, 10 Gaussian factors and 128 Gaussian factors can not generate each other.  Analogically, if the data are generated by 10 intrinsic Gaussian factors, it intuitively would not be inferred to 128 Gaussian factors by the VAE and some factors would be independent with data although we may pre-set in this way.

\subsubsection{Mutual Information Reflexes the Absolute Statistic Dependence}
In order to quantify the dependence and estimate which factor influences the generating process or has no effect at all,
the mutual information of $\zb_{\encb}$ and $\xb$, $I(\Xb;{Z_{\encb}}_h)$ can be taken as a rational indicator(\cite{peng2005feature})
. That is,
\begin{equation}\label{mutual information of encoder}
  I(\Xb;{Z_{\encb}}_h)=\E_{\xb\sim p_{data}(\xb)}D_{KL}(q_\encb(z_h|\xb)||q_\encb(z_h)).
\end{equation}

The mutual information can reflect the absolute statistic dependence: $I(\Xb;{Z_{\encb}}_h)=0$ if and only if $\Xb$ and ${Z_{\encb}}_h$ are independent. The larger $I(\Xb;{Z_{\encb}}_h)$ is, the more information ${Z_{\encb}}_h$ conveys regarding $\Xb$, and the more influential factor it should be to represent the data.
%The less $I(\Xb;{Z_{\encb}}_h)$, the more similar $q_{\encb}(\xb,\zb_{\encb})$ to $p_{data}{\encb}(\xb)q(\zb_{\encb})$.

\subsubsection{Sparsity in Mutual Information\label{Information Channel}}
Actually, mutual information is implicitly involved in the VAE objective. The following theorem further suggests the VAE objective induces the sparsity in mutual information. It then explains why factors are ignored from the perspective of mutual information.

\begin{theorem}[Objective Decomposition]\label{Decomposition of Objective} If $q_\encb(\zb|\xb) \ll p_\decb(\zb)$\footnote{That is the support of $q_\encb(\zb|\xb)$ is contained in  support of  $p_\decb(\zb)$.}, for any $\xb$, $q_\encb(\zb|\xb)=q_\encb(z_1|\xb)\cdots q_\encb(z_H|\xb)$ and $p_{\decb}(\zb)=\N(\zb|\textbf{0},I_H)$ then it yields the following decomposition:
\begin{itemize}
  \item  $L_1$ norm expression of the KL-divergence term in the VAE:

  \small
  \begin{eqnarray}
  &&\E_{\xb\sim p_{data}(\xb)}D_{KL}(q_\encb(\zb|\xb)||p_\decb(\zb))\nonumber\\ &=&\sum_{h=1}^{H}\E_{\xb\sim p_{data}(\xb)}D_{KL}(q_\encb(z_h|\xb)||p_\decb(z_h))\nonumber\\&=&\Vert (\E_{\xb\sim p_{data}(\xb)}D_{KL}(q_\encb(z_1|\xb)||p_\decb(z_1)),\nonumber\\&&\E_{\xb\sim p_{data}(\xb)}D_{KL}(q_\encb(z_2|\xb)||p_\decb(z_2)),\cdots,\nonumber\\&&\E_{\xb\sim p_{data}(\xb)}D_{KL}(q_\encb(z_h|\xb)||p_\decb(z_h)))\Vert_1.
  \end{eqnarray}
  \normalsize

  \item Further decomposition of an entity in the $L_1$ norm expression\footnote{It is similar to the result in \cite{hoffman2016elbo}.}:
  \begin{eqnarray}
  &&\E_{\xb\sim p_{data}(\xb)}D_{KL}(q_\encb(z_h|\xb)||p_\decb(z_h))\nonumber\\&=&I(\Xb;{Z_{\encb}}_h)+D_{KL}(q_\encb(z_h)||p_\decb(z_h)).
  \end{eqnarray}
\end{itemize}
\end{theorem}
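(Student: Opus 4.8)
The plan is to exploit the product structure of both distributions for the first identity and a single multiply-and-divide trick for the second. For Part 1, since $q_\encb(\zb|\xb)=\prod_{h}q_\encb(z_h|\xb)$ and $p_\decb(\zb)=\prod_h p_\decb(z_h)$, the log-ratio in the KL integrand splits additively as $\log\frac{q_\encb(\zb|\xb)}{p_\decb(\zb)}=\sum_h \log\frac{q_\encb(z_h|\xb)}{p_\decb(z_h)}$. First I would substitute this into $D_{KL}(q_\encb(\zb|\xb)\|p_\decb(\zb))$, pull the finite sum outside the integral, and then integrate out every coordinate $z_{h'}$ with $h'\neq h$; because each $q_\encb(z_{h'}|\xb)$ is a probability density, each such coordinate contributes a factor $\int q_\encb(z_{h'}|\xb)\,dz_{h'}=1$, leaving exactly $\sum_h D_{KL}(q_\encb(z_h|\xb)\|p_\decb(z_h))$. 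Taking $\E_{\xb\sim p_{data}(\xb)}$ and using linearity of expectation yields the stated sum. The $\ell_1$-norm rewriting is then immediate: each expected per-factor divergence is nonnegative (a KL divergence, and its expectation, is nonnegative), so the sum equals the $\ell_1$ norm of the vector whose entries are those per-factor expected divergences.

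For Part 2, I would use the aggregate posterior $q_\encb(z_h)=\int q_\encb(z_h|\xb)p_{data}(\xb)\,d\xb$ as a pivot. Writing $\log\frac{q_\encb(z_h|\xb)}{p_\decb(z_h)}=\log\frac{q_\encb(z_h|\xb)}{q_\encb(z_h)}+\log\frac{q_\encb(z_h)}{p_\decb(z_h)}$ and substituting into $\E_{\xb}D_{KL}(q_\encb(z_h|\xb)\|p_\decb(z_h))$ splits the expected divergence into two pieces. The first piece is $\E_{\xb}D_{KL}(q_\encb(z_h|\xb)\|q_\encb(z_h))$, which is precisely $I(\xb;{z_{\encb}}_h)$ by the definition in Eq.~(\ref{mutual information of encoder}). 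In the second piece the log-ratio $\log\frac{q_\encb(z_h)}{p_\decb(z_h)}$ no longer depends on $\xb$, so I would move $\E_{\xb}$ inside the $z_h$-integral; the data average then acts only on $q_\encb(z_h|\xb)$ and collapses it to the marginal, $\int q_\encb(z_h|\xb)p_{data}(\xb)\,d\xb=q_\encb(z_h)$, leaving $\int q_\encb(z_h)\log\frac{q_\encb(z_h)}{p_\decb(z_h)}\,dz_h=D_{KL}(q_\encb(z_h)\|p_\decb(z_h))$. Adding the two pieces gives the claimed decomposition.

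The algebra here is light; the only genuinely delicate point is the bookkeeping that legitimizes the interchanges of integration — factoring out the nuisance coordinates in Part 1 and swapping $\E_{\xb}$ with the $z_h$-integral in Part 2. This is exactly where the hypothesis $q_\encb(\zb|\xb)\ll p_\decb(\zb)$ earns its keep: it guarantees the relevant Radon--Nikodym density exists so that each KL integrand is well defined, and, combined with the standing assumption that the objective is finite, it makes each log-ratio integrand absolutely integrable against the product measure on $(\xb,z_h)$-space, so a Fubini-type argument licenses the swaps. I would therefore open the proof by recording that all densities are finite almost everywhere and that $q_\encb(z_h)$ is a bona fide probability density (it integrates to one by interchanging the order of integration), and thereafter treat the interchanges as justified; everything else reduces to the routine splittings sketched above.
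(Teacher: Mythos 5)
Your proof is correct and follows essentially the same route as the paper's: the paper likewise pivots on the aggregate posterior $q_\encb(z_h)$ (multiplying and dividing inside the logarithm, so that the two resulting pieces are $I(\xb;{z_{\encb}}_h)$ and $D_{KL}(q_\encb(z_h)||p_\decb(z_h))$), and treats the coordinate-wise splitting of the first identity as obvious from the product structure. Your write-up simply makes explicit the nuisance-coordinate integration and the Fubini/integrability bookkeeping that the paper's terse appendix proof leaves implicit.
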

\begin{proof} \label{pf:Decomposition of Objectiven}
The $L_1$ norm expression is obvious. We  prove the further decomposition of an entity in the $L_1$ norm expression:
\begin{equation}
   \E_{\xb\sim p_{data}(\xb)}D_{KL}(q_\encb(z_h|\xb)||p_\decb(z_h))=\int q_\encb(z_h|\xb)p_{data}(\xb)\frac{q_\encb(z_h|\xb)p_{data}(\xb)}{p_\decb(z_h)p_{data}(\xb)}d\xb $$$$= \int q_\encb(z_h|\xb)p_{data}(\xb)\frac{q_\decb(z_h|\xb)p_{data}(\xb)}{q_\encb(z_h)p_{data}(\xb)} \frac{q_\encb(z_h)}{p_\decb(z_h)}d\xb$$$$=I(\Xb; {Z_{\encb}}_h)+D_{KL}(q_\encb(z_h)||p_\decb(z_h)).
\end{equation}
\end{proof}
%Proof is given in Appendix~\ref{pf:Decomposition of Objectiven}.
The theorem demonstrates that the expectation of the second term in variation lower bound in Eq.~(\ref{Approximate Inference}) can be represented in the form of $L_1$ norm which inclines to induce the sparsity of $I(\Xb;{Z_{\encb}}_h)$ and $D_{KL}(q_\encb(z_h)||p_\decb(z_h))$ together in $h$, clipping down the non-intrinsic factor dimension to some extent. The sparsity of Expectation $\E_{\xb\sim p_{data}(\xb)}D_{KL}(q_\encb(z_h|\xb)||p_\decb(z_h))$ actually leads to sparsity of both its summarization terms $I(\Xb;{Z_{\encb}}_h)$ and $D_{KL}(q_\encb(z_h)||p_\decb(z_h))$ together in $h$, since both of them are non-negative. For any zero value summarization, both of its elements should also be zero.
Thus this regularization term inclines to intrinsically conduct sparsity of mutual information $I(\Xb;{Z_{\encb}}_h)$, which have been comprehensively substantiated by all our experiments, as can be easily seen in Fig.(\ref{fig:Iencoderzhsigma2_argumented}) and Fig.(\ref{fig:mutual information sparsity in DEAP and celeba}).

\begin{figure*}
\centering

 \includegraphics[width=\textwidth]{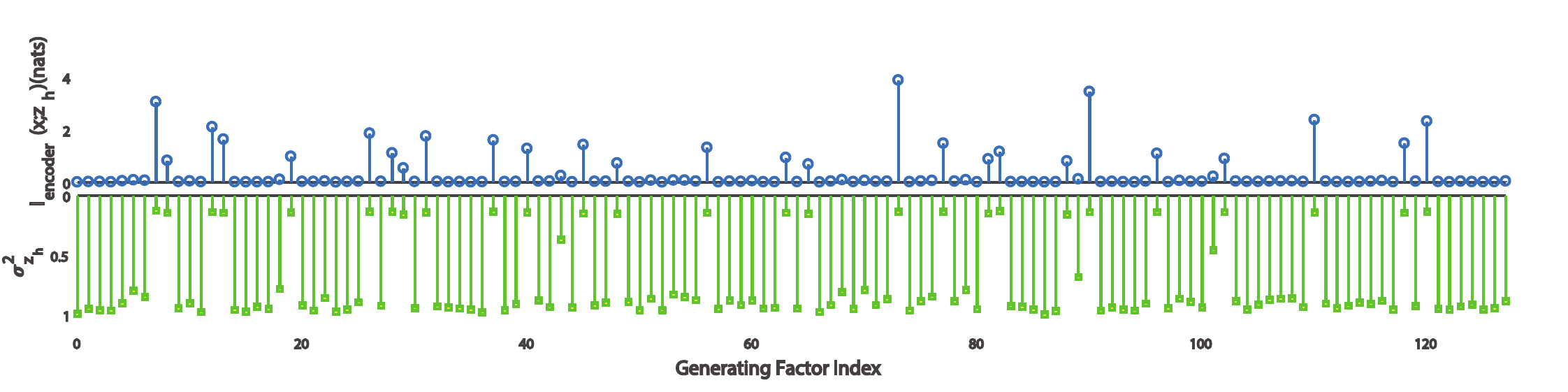}
{(A) $I(\Xb;{Z_{\encb}}_h)$, $\sigma^2_{z_h}$ plot of $\beta(=40)$-VAE on CelebA.}

 \includegraphics[width=\textwidth]{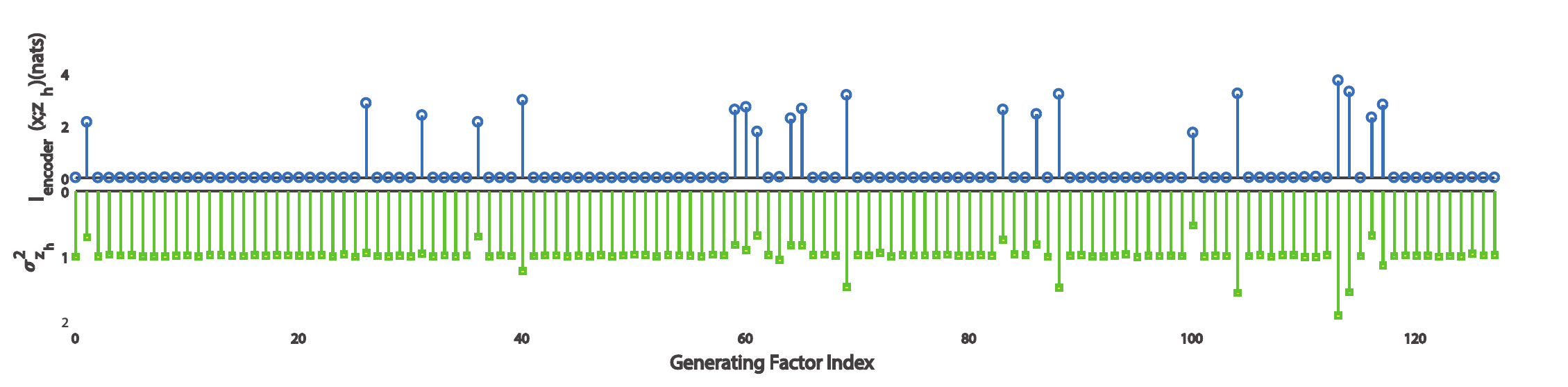}
 {(B) $I(\Xb;{Z_{\encb}}_h)$, $\sigma^2_{z_h}$ plot of $\beta(=6)$-VAE on DEAP.}
\caption{\textbf{Mutual information sparsity occurs on CelebA and DEAP.} \label{fig:mutual information sparsity in DEAP and celeba}}
\end{figure*}

Therefore, the VAE objective inclines to induce mutual information sparsity in factor dimension over the data intrinsic dimension and the factor ignored phenomenon occurs. On the one hand, with increase in the KL divergence regularization, even when the number of latent factors is set large, unlike auto-encoder, the over-fitting issue still tends not to occur. On the other hand, this  helps us get influential factors to represent the variants of data, and facilitate an efficient generalization of data by varying these useful factors while neglecting others.

By the way, the following theorem suggests the condition that we can use $I(\xb;z_h)$ to estimate the whole mutual information.

\begin{theorem}[Mutual Information Separation]\label{Separation of the Mutual Information} Let $Z_1,\cdots,Z_H$ be independent unit Gaussian distribution, and $Z_1,Z_2,\cdots,Z_H$ be conditional independent given $\Xb$. Then
  \begin{eqnarray}\label{mutualinfoseparation}I(\Xb;Z_1,\cdots,Z_H)&=&\sum_{h=1}^{H}I(\Xb;Z_h)\nonumber\\&=&\Vert (I(\Xb;Z_1),I(\Xb;Z_2),\cdots,I(\Xb;Z_H))\Vert_1.\nonumber\\
  \end{eqnarray}
\end{theorem}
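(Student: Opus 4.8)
The plan is to prove the statement directly from the definition of mutual information as an expected Kullback--Leibler divergence, exactly the form used in Eq.~(\ref{mutual information of encoder}), and then to exploit the two product factorizations supplied by the hypotheses to split a single KL divergence into a sum of per-coordinate KL divergences. Concretely, I would start from
$$I(\xb;z_1,\cdots,z_H)=\E_{\xb\sim p_{data}(\xb)}D_{KL}\big(q_\encb(\zb|\xb)\,\|\,q_\encb(\zb)\big),$$
and record the two independence assumptions in density form: conditional independence given $\xb$ gives $q_\encb(\zb|\xb)=\prod_{h=1}^H q_\encb(z_h|\xb)$, while marginal independence of the unit Gaussians gives $q_\encb(\zb)=\prod_{h=1}^H q_\encb(z_h)$.

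Next I would substitute both products into the integrand. Because the logarithm of a ratio of products is the sum of the logarithms of the coordinatewise ratios, the KL integrand becomes $\sum_{h=1}^H \log\frac{q_\encb(z_h|\xb)}{q_\encb(z_h)}$, weighted by the joint density $\prod_j q_\encb(z_j|\xb)$. Integrating term by term, in the $h$-th summand every factor $q_\encb(z_j|\xb)$ with $j\neq h$ integrates to $1$ over its own coordinate, leaving exactly $\int q_\encb(z_h|\xb)\log\frac{q_\encb(z_h|\xb)}{q_\encb(z_h)}\,dz_h=D_{KL}\big(q_\encb(z_h|\xb)\,\|\,q_\encb(z_h)\big)$. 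Taking the expectation over $\xb$ and comparing with Eq.~(\ref{mutual information of encoder}) yields $I(\xb;z_1,\cdots,z_H)=\sum_{h=1}^H I(\xb;z_h)$. The final equality with the $L_1$ norm is then immediate, since mutual information is nonnegative, so each $I(\xb;z_h)=|I(\xb;z_h)|$ and the sum equals $\Vert(I(\xb;z_1),\cdots,I(\xb;z_H))\Vert_1$.

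An equivalent route, which I would keep in reserve, is the (differential) entropy decomposition $I(\xb;\zb)=H(\zb)-H(\zb|\xb)$: marginal independence gives $H(\zb)=\sum_h H(z_h)$ and conditional independence gives $H(\zb|\xb)=\sum_h H(z_h|\xb)$, and subtracting termwise reproduces the claim. This makes clear that unit Gaussianity is not essential to the algebra --- only the two independence properties are --- though it conveniently guarantees that all the entropies and KL divergences are finite and well defined. I expect the only genuine obstacle to be the measure-theoretic bookkeeping: justifying that the integrand is absolutely integrable so that the interchange of summation and integration is legitimate (Fubini--Tonelli), and that the relevant densities are mutually absolutely continuous so the divergences are finite. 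Here I would lean on the same absolute-continuity hypothesis $q_\encb(\zb|\xb)\ll p_\decb(\zb)$ used in Theorem~\ref{Decomposition of Objective}, together with the Gaussian tails, to dispatch these finiteness concerns; the remaining computation is routine.
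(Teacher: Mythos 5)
Your proposal is correct and follows essentially the same route as the paper's own proof: both write the joint mutual information as an integral of $\log$ of a density ratio, use conditional independence to factor $q_\encb(\zb|\xb)$ and marginal independence to factor $q_\encb(\zb)$, split the logarithm into a sum, and marginalize out the remaining coordinates to obtain $\sum_{h}I(\xb;z_h)$. Your added remarks (the entropy-difference alternative and the integrability bookkeeping) go slightly beyond what the paper records, but the core argument is identical.
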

\begin{proof} \label{pf:Separation of the Mutual Information}
  $$I(\Xb;Z_1,\cdots,Z_H)=\int p(z_1,\cdots,z_H,\xb)\log \frac{p(\xb,z_1,\cdots,z_H)}{p(z_1,\cdots,z_H)p(\xb)}dz_1\cdots dz_Hdx$$$$=\int p(\xb,z_1,\cdots,z_H)\log \frac{\prod_{h=1}^H p(z_h|\xb)}{\prod_{h=1}^H p(z_h)}dz_1\cdots dz_Hd\xb$$$$=\sum_{h=1}^{H}\int p(\xb,z_h)\log \frac{ p(z_h|\xb)}{ p(z_h)}dz_hd\xb=\sum_{h=1}^{H}I(\Xb;Z_h).$$
\end{proof}
%Proof is presented in Appendix~\ref{pf:Separation of the Mutual Information}.
This theorem suggests that if the learnt $q_\encb(\zb)$ can factorize and the $q_\encb(\zb|\xb)$ can factorize,
then we could use the sum of $I(\xb; {z_{\encb}}_h)$ to direct estimate the whole mutual information.

\subsection{Reconstruction and Classification  Theoretical Supports}

According to \cite{cover2012elements}, the mutual information can also provide a lower bound for the best mean recover error.
\begin{theorem} \label{thm: MSE and MI}Suppose $\Xb$ is with differential entropy $H(\Xb)$, then let $\hat{\Xb}(\Zb_{\encb})$ be an estimation of $\Xb$, and give side information $\Zb_{\encb}$\footnote{ Notice that $\Zb_{\encb}$ are random variables with $q_\encb(\zb)=\int_{}^{}q_\encb(\zb|\xb)p_{data}(\xb)d\xb$. $\hat \Xb(\Zb_{\encb})$ is a function named $\hat \Xb$ of $\Zb_{\encb}$.}, and then it holds that
\begin{equation}
  \E(\Xb-\hat{\Xb}(\Zb_{\encb}))^2\ge \frac{1}{2 \pi e} e^{2 (H(\Xb)-I(\Xb;\Zb_{\encb}))}.
\end{equation}
\end{theorem}

Therefore, if we set $p_\decb(\xb|\zb)=\N(\xb|\decb_\mu(\zb),\decb_\sigma)$, then $\Xb_{rec}=\decb_\mu(\Zb_{\encb})$ has $ \frac{1}{2 \pi e} e^{2 (H(\Xb)-I(\Xb;\Zb_{\encb}))}$ as the lower bound for recovering.  Let $\Zb_{\encb}=[\Zb_{major}, \Zb_{minor}]$. Let us only use a major set  of  factors $\Zb_{major}$, that is to construct a new estimator $\Xb_{recc}=\decb_\mu(\Zb_{major},\mathbf{0})$ with setting $Z_{minor}=\mathbf{0}$. With the assumption that $q_\encb(\zb)$ can factorize, it yields the separation of mutual information $I(\Xb;\Zb_{\encb})=I(\Xb;\Zb_{minor})+I(\Xb;\Zb_{major})$. It yields the following bound,
\begin{eqnarray}\label{}
  &&\E(\Xb-\Xb_{recc})^2\nonumber \\&\ge& \frac{1}{2 \pi e} e^{2 (H(\Xb)-I(\Xb;\Zb_{major}))}\nonumber\\ &\ge& \frac{1}{2 \pi e} e^{2 (H(\Xb)-I(\Xb;\Zb_{major}))}e^{-2I(\Xb;\Zb_{minor})}.
\end{eqnarray}

The theorem implies that the mutual information carried by the selecting factors directly influences on the lower bound of the best reconstruction and we may select some top influential factors carrying the most information to represent and generate the data with less reconstruction distortion.

We further provide some theoretical supports for the proposed mutual information as the factor indicator in classification.

Suppose that Markov chain condition,  $\Yb\to\Xb\to\Zb_{\encb} \to \Yb_{pre}$, holds.\footnote{ This condition implies $\Yb\to \Xb\to\Zb_{\encb}$ which guarantees that $q_\encb(\zb|\xb,\yb)=q_\encb(\zb|\xb)$. It also implies $\Yb\to\Zb_{\encb}\to\Yb_{\preb}$ which guarantees that $\Yb_{\preb}$ can be then taken as a rational estimator of $\Yb$ based on Theorem 5.} According to the Fano's inequality (\cite{cover2012elements}) and the information processing inequality the mutual information also correlates with the classification error.

\begin{theorem}[Fano's inequality] For any estimation $\hat\Yb$ such that $\Yb\to\Zb_{\encb} \to \hat\Yb$, with $P_e=Pr(\hat\Yb \neq \Yb )$, we have
\begin{equation}\label{}
 H(P_e)+P_e log \vert \mathcal{Y}\vert \ge H(\Yb)-I(\Yb;\Zb_{\encb})\ge H(\Yb)-I(\Xb;{\Zb_{\encb}})
\end{equation}
where $\mathcal{Y}$ is the alphabet of $\Yb$.
Since the number of class is no smaller than 2, it naturally holds that $log(|\mathcal{Y}|) > 0$. This inequality can then pbe weakened to
\begin{equation}\label{}
 1+P_e log \vert \mathcal{Y}\vert \ge H(\Yb)-I(\Yb;\Zb_{\encb})\ge H(\Yb)-I(\Xb;{\Zb_{\encb}}).
\end{equation}
or
\begin{equation}\label{}
 P_e\ge \frac{H(\Yb)-I(\Yb;\Zb_{\encb})-1}{log \vert \mathcal{Y} \vert }\ge \frac{H(\Yb)-I(\Xb;{\Zb_{\encb}})-1}{log \vert \mathcal{Y} \vert}.
\end{equation}
\end{theorem}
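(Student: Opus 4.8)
The plan is to read the statement as the classical Fano inequality spliced to two applications of the data-processing inequality (DPI), so I would prove three pieces separately and then chain them. The first piece is the base bound $H(P_e)+P_e\log|\mathcal{Y}|\ge H(\yb|\yb_{pre})$. To get it I introduce the error indicator $E=\mathbf{1}[\hat\yb\neq\yb]$, so that $P_e=Pr(E=1)$, and expand $H(E,\yb|\yb_{pre})$ by the chain rule in two ways. Since $E$ is a deterministic function of $(\yb,\yb_{pre})$, one cross term vanishes; then bounding $H(E|\yb_{pre})\le H(E)=H(P_e)$ and splitting $H(\yb|E,\yb_{pre})$ over the two values of $E$ (contributing $0$ when $E=0$ and at most $\log(|\mathcal{Y}|-1)\le\log|\mathcal{Y}|$ when $E=1$) yields the base bound.

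Second, I would apply DPI to the sub-chain $\yb\to\zb_{\encb}\to\yb_{pre}$, which gives $I(\yb;\yb_{pre})\le I(\yb;\zb_{\encb})$ and hence $H(\yb|\yb_{pre})\ge H(\yb|\zb_{\encb})=H(\yb)-I(\yb;\zb_{\encb})$, producing the middle quantity. Third, I would apply DPI once more, this time to the cut $\yb\to\xb\to\zb_{\encb}$, obtaining $I(\yb;\zb_{\encb})\le I(\xb;\zb_{\encb})$ and therefore $H(\yb)-I(\yb;\zb_{\encb})\ge H(\yb)-I(\xb;\zb_{\encb})$. Concatenating the three pieces gives the first displayed inequality exactly as stated.

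The two weakened forms then follow by elementary algebra: the binary entropy satisfies $H(P_e)\le 1$, so $1+P_e\log|\mathcal{Y}|\ge H(P_e)+P_e\log|\mathcal{Y}|$ reproduces the bound with $1$ in place of $H(P_e)$, and isolating $P_e$ gives the final lower bound $P_e\ge (H(\yb)-I(\yb;\zb_{\encb})-1)/\log|\mathcal{Y}|$. The computation is essentially routine, being either in \cite{cover2012elements} or a one-line rearrangement. The only point requiring care, and thus the main obstacle, is bookkeeping the two DPI invocations at the correct cuts of the full chain $\yb\to\xb\to\zb_{\encb}\to\yb_{pre}$: one cut must isolate the predictor $\yb_{pre}$ (to pass from $H(\yb|\yb_{pre})$ to $H(\yb|\zb_{\encb})$), while the other must isolate $\xb$ (to pass from $I(\yb;\zb_{\encb})$ to $I(\xb;\zb_{\encb})$). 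I would also check that the Fano step survives when the hard decision $\hat\yb$ is read off from the possibly soft predictor $\yb_{pre}$, which is precisely why the harmless loosening of $\log(|\mathcal{Y}|-1)$ to $\log|\mathcal{Y}|$ shows up in the stated form.
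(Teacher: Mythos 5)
Your proposal is correct and follows exactly the route the paper intends: the classical Fano bound (proved via the error-indicator / chain-rule argument) chained with two data-processing steps, one at the cut $\yb\to\zb_{\encb}\to\yb_{pre}$ and one at $\yb\to\xb\to\zb_{\encb}$. The paper itself offers no proof of this theorem — it simply cites Cover and Thomas (2012) for both Fano's inequality and the data-processing inequality — so your writeup just supplies the standard textbook details the paper delegates to that citation.
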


Note that according to information processing inequality  $I(\Xb;{Z_{\encb}}_h)\ge I(\yb;{z_{\encb}}_h)$.  $I(\Xb;{Z_{\encb}}_h)=0\Rightarrow I(\yb;{z_{\encb}}_h)=0$, and If $I(\yb;{z_{\encb}}_h)=0$ the $h^{th}$ factor will not influence the prediction. Let we regard $\Yb_{pre}$ as $\hat\Yb$. With the assumption that $q_\encb(\zb)$ can factorize, since $\Yb\to\Zb_{\encb} \to \Yb_{pre}$, the theorem suggests the mutual information carried by the selecting factors directly influences the lower bound of the classification error and therefore we can remove minor factors according to the mutual information $I(\Xb;{Z_{\encb}}_h)$ without significantly lifting the lower bound of the prediction error.

\subsection{Algorithms to Quantitatively Calculate the  Proposed indicators}
In order to calculate $I(\Xb;\Zb_{\encb})$, we assume that $q^*(\zb)=\N(\zb|\textbf{0},diag(\sigma^*_1,\cdots,\sigma^*_H))$ is a factorized zero mean Gaussian estimation for $q_\encb(\zb)$.

We can then list the indicators to be estimated as:

%\begin{definition}[Estimation for $\E\limits_{\xb\sim p_{data}(\xb)}D_{KL}(q_\encb(\zb|\xb)||p_\decb(\zb))$]\label{Empirical Estimation KL(qzx||pz)}
%  \begin{equation}
%    \tilde{D}_{KL}(q_\encb(\zb|\xb)||p_\decb(\zb))=\frac{1}{M}\sum_{m=1}^{M}D_{KL}(q_\encb(\zb|\xb_m)||p_\decb(\zb)).
%  \end{equation}
%\end{definition}

\begin{definition}[Estimation for $I(\Xb;\Zb_{\encb})$: the information conveyed by whole factors]\label{Empirical Estimation For Iencoder}
  \begin{equation}
    I_{est}(\Xb;\Zb_{\encb})_M=\frac{1}{M}\sum_{m=1}^{M}D_{KL}(q_\encb(\zb|\xb^m)||q^*(\zb)).
  \end{equation}
\end{definition}
This estimation uses $M$ sample according to the empirical form of Corollary \ref{close estimation for mutual information}.

\begin{definition}[Estimation for $I(\Xb;{Z_{\encb}}_h)$: the information conveyed by a factor]\label{Empirical Estimation For Iencoder(xzh)}
  \begin{equation}
    I_{est}(\Xb;{Z_{\encb}}_h)_M=\frac{1}{M}\sum_{m=1}^{M}D_{KL}(q_\encb({z_{\encb}}_h|\xb^m)||q^*({z_{\encb}}_h)).
  \end{equation}
\end{definition}
This indicator quantifies mutual information of a specific factor and input data.
%
%\begin{definition}[Estimation for $D_{KL}(q_\encb(\zb)||p_\decb(\zb))$]\label{Empirical Estimation For KL(qz||pz)}
%  \begin{equation}
%     \tilde{D}_{KL}(q_\encb(\zb)||p_\decb(\zb))= $$$$  \tilde{D}_{KL}(q_\encb(\zb|\xb)||p_\decb(\zb))-\tilde{I}_{encoder}(\xb;\zb).
%  \end{equation}
%\end{definition}

Note that the above indicators  need the value of $q^*(\zb)$, and thus we need to design algorithms to calculate this term. Based on Theorem~\ref{Decomposition of Objective} through the minimization equivalence, we know that
\begin{equation}\label{Minimization Equivalence}
  \min_{q}\E_{\xb\sim p_{data}(\xb)}D_{KL}(q_\encb(\zb|\xb)||q(\zb))$$$$\Leftrightarrow \min_{q}D_{KL}(q_\encb(\zb)||q(\zb)) d\zb,
\end{equation}

and then we can prove the following result:

\begin{corollary}\label{close estimation for mutual information} if $q_\encb(\zb|\xb) \ll  q^*(\zb)$ then
    \begin{eqnarray}
       &&\E_{\xb\sim p_{data}(\xb)}D_{KL}(q_\encb(\zb|\xb)||q^*(\zb))\nonumber\\&=&I(\Xb;\Zb_{\encb})+D_{KL}(q_\encb(\zb)||q^{*}(\zb)).
    \end{eqnarray}
\end{corollary}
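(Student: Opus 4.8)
The plan is to prove this decomposition by directly expanding the expected KL-divergence on the left-hand side and inserting the aggregated posterior $q_\encb(\zb)=\int q_\encb(\zb|\xb)p_{data}(\xb)\,d\xb$ as a bridge between $q_\encb(\zb|\xb)$ and $q^*(\zb)$. This mirrors the structure of Theorem~\ref{Decomposition of Objective}, simply replacing the prior $p_\decb(\zb)$ there with the reference distribution $q^*(\zb)$; notably, the argument does not rely on $q^*$ being factorized, only on the absolute continuity hypothesis $q_\encb(\zb|\xb)\ll q^*(\zb)$, which guarantees that the Radon--Nikodym density exists so every logarithm below is finite almost everywhere.

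First I would write the left-hand side explicitly as
\begin{equation*}
\E_{\xb\sim p_{data}(\xb)}\int q_\encb(\zb|\xb)\log\frac{q_\encb(\zb|\xb)}{q^*(\zb)}\,d\zb,
\end{equation*}
and split the integrand through the telescoping identity
\begin{equation*}
\log\frac{q_\encb(\zb|\xb)}{q^*(\zb)}=\log\frac{q_\encb(\zb|\xb)}{q_\encb(\zb)}+\log\frac{q_\encb(\zb)}{q^*(\zb)}.
\end{equation*}
The expectation of the first summand is precisely $\E_{\xb}D_{KL}(q_\encb(\zb|\xb)\|q_\encb(\zb))$, which by the definition in Eq.~(\ref{mutual information of encoder}), read over the whole vector $\zb$ rather than a single coordinate $z_h$, equals the mutual information $I(\xb;\zb_\encb)$.

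Next I would treat the second summand $\E_{\xb}\int q_\encb(\zb|\xb)\log\frac{q_\encb(\zb)}{q^*(\zb)}\,d\zb$. The key maneuver is to write the outer expectation as an integral against $p_{data}(\xb)$ and interchange the order of integration, so that the inner $\xb$-integral collapses via $\int q_\encb(\zb|\xb)p_{data}(\xb)\,d\xb=q_\encb(\zb)$, leaving
\begin{equation*}
\int q_\encb(\zb)\log\frac{q_\encb(\zb)}{q^*(\zb)}\,d\zb=D_{KL}(q_\encb(\zb)\|q^*(\zb)).
\end{equation*}
Summing the two contributions yields the claimed identity.

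The main obstacle I anticipate is the measure-theoretic bookkeeping rather than the algebra: justifying the Fubini--Tonelli interchange in the second step, and confirming that both $\log\frac{q_\encb(\zb|\xb)}{q_\encb(\zb)}$ and $\log\frac{q_\encb(\zb)}{q^*(\zb)}$ are integrable so that splitting into two separately finite terms is legitimate. The hypothesis $q_\encb(\zb|\xb)\ll q^*(\zb)$, together with $q_\encb(\zb)\ll q^*(\zb)$ (which is inherited by integrating the conditional densities over $\xb$), ensures the relevant densities are well defined and each KL term is finite and non-negative; this is exactly what licenses the decomposition and keeps the interchange of integration order valid.
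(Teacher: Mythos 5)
Your proposal is correct and follows essentially the same route as the paper: the paper proves this corollary by the same argument as Theorem~\ref{Decomposition of Objective}, namely inserting the aggregated posterior $q_\encb(\zb)$ inside the logarithm to split $\log\frac{q_\encb(\zb|\xb)}{q^*(\zb)}$ into a mutual-information term and a $D_{KL}(q_\encb(\zb)\|q^*(\zb))$ term via the marginalization $\int q_\encb(\zb|\xb)p_{data}(\xb)\,d\xb=q_\encb(\zb)$. Your added attention to the Fubini interchange and to the inherited absolute continuity $q_\encb(\zb)\ll q^*(\zb)$ is care the paper omits, but it does not change the substance of the argument.
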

The proof of  Corollary \ref{close estimation for mutual information} is the same as that of Theorem~\ref{Decomposition of Objective}. This corollary suggests that the estimation defined in Definition~\ref{Empirical Estimation For Iencoder} provides another upper bound for the capacity of the encoder network. Empirically, this estimation is a much tighter estimation than the second term of the Objective~(\ref{Approximate Inference}).

$q^*(\zb)$ can then be obtained  by solving the following optimization problem:
\begin{equation}\label{Obtain q(z)}
  q^*(\zb)=\arg\min_{q}\frac{1}{M}\sum_{m=1}^{M}D_{KL}(q_\encb(\zb|\xb^m)||q(\zb)).
\end{equation}
The above minimization problem can be solved with a closed-form solution as follows\footnote{The above minimization problem can also be solved by gradient descent.}:
$$\sigma^*_i=\frac{\sum_{m=1}^{M}\sigma_i(\xb^m)+\mu_i^2(\xb^m)}{M}.$$
The proof is as follows:
%\textbf{(!!!please add proof here!!!})

Notice that suppose we have  two multivariate normal distributions, with means $\mu _{0},\mu _{1}$ and with non-singular covariance matrices $\Sigma _{0},\Sigma _{1}$ and the two distributions have the same dimension H, then it yields\cite{duchi2007derivations} \begin{equation}\label{KL-divergence of Gaussians}
  D_{KL}(\N_0||\N_1)=\frac{1}{2}(tr(\Sigma_1^{-1}\Sigma_0)+(\mu_1-\mu_0)^T\Sigma_1^{-1}(\mu_1-\mu_0)-H+\ln(\frac{\det\Sigma_1}{\det\Sigma_0})).
\end{equation}
Note that we assume $q(\zb)=\N(\zb|\textbf{0},diag(\sigma_1,\cdots,\sigma_H))$ and

 $q_\encb(\zb|\xb)=\N(\zb |${\boldmath$\mu$}$(\xb),diag(\sigma_1(\xb),\cdots,\sigma_H(\xb)))$. Thus, we have
\begin{eqnarray}
% \nonumber % Remove numbering (before each equation)
  &&\frac{1}{M}\sum_{m=1}^{M}D_{KL}(q_\encb(\zb|\xb^m)||q(\zb))\nonumber\\  &=&\frac{1}{M}\sum_{m=1}^{M}\frac{1}{2}\sum_{i=1}^{H}\frac{\sigma_i(\xb^m)}{\sigma_i}+\frac{\mu_i(\xb^m)^2}{\sigma_i} -1 +\ln\frac{\sigma_i}{\sigma_i(\xb^m)} \nonumber\\
  &=&  \frac{1}{2M}\sum_{i=1}^{H}\sum_{m=1}^{M}\frac{\sigma_i(\xb^m)}{\sigma_i}+\frac{\mu_i(\xb^m)^2}{\sigma_i} -1 +\ln\frac{\sigma_i}{\sigma_i(\xb^m)}.\nonumber
\end{eqnarray}
The optimization can be divided into H optimization sub-problems as the following,
\begin{equation}\label{subproblem}
  \sigma_i^*=\arg\min_{\sigma_i}\sum_{m=1}^{M}\frac{\sigma_i(\xb^m)}{\sigma_i}+\frac{\mu_i(\xb^m)^2}{\sigma_i} -1 +\ln\frac{\sigma_i}{\sigma_i(\xb^m)}, i=1,\cdots,H.
\end{equation}
\begin{equation}\label{subproblem}
  \nabla_{\sigma_i}(\sum_{m=1}^{M}\frac{\sigma_i(\xb^m)}{\sigma_i}+\frac{\mu_i(\xb^m)^2}{\sigma_i} -1 +\ln\frac{\sigma_i}{\sigma_i(\xb^m)})=\sum_{m=1}^{M}-\frac{\sigma_i(\xb^m)+\mu_i(\xb^m)^2}{\sigma_i^2}+\frac{1}{\sigma_i}.
\end{equation}
Since
\begin{equation}\label{subproblem solving}
\sum_{m=1}^{M}-\frac{\sigma_i(\xb^m)+\mu_i(\xb^m)^2}{{\sigma_i^*}^2}+\frac{1}{\sigma_i^*}=0,
\end{equation}
it yields
\begin{equation}\label{subproblem solving result}
\sigma_i^*=\frac{\sum_{m=1}^{M}\sigma_i(\xb^m)+\mu_i(\xb^m)^2}{M}.
\end{equation}

The above procedure is summarized and presented in  Algorithm~\ref{alg:example} to calculate the proposed indicators.

\begin{algorithm}[]
   \caption{Mutual Information Estimation}
   \label{alg:example}
\begin{algorithmic}[1]
   \STATE {\bfseries Input:} Sampled Data $\{\xb^m\}_{m=1}^{M}$,\\ Encoder Network $q_\encb(\zb|\xb)=\N(\zb |${\boldmath$\mathbf{\mu}$}$(\xb),diag(\sigma_1(\xb),\cdots,\sigma_H(\xb)))$
    \STATE {\bfseries Obtain:} $q^*(\zb)=\arg\min_{q}\frac{1}{M}\sum_{m=1}^{M}D_{KL}(q_\encb(\zb|\xb^m)||q(\zb)).$
\FOR{$i=h$ {\bfseries to} $H$}
\STATE $\sigma^*_i=\frac{\sum_{m=1}^{M}\sigma_i(\xb^m)+\mu_i^2(\xb^m)}{M}.$  \ENDFOR
\STATE {\bfseries   Calculate:}    $I_{est}(\Xb;{Z_{\encb}}_h)_M=\frac{1}{M}\sum_{m=1}^{M}D_{KL}(q_\encb({z_{\encb}}_h|\xb^m)||q^*({z_{\encb}}_h))$
\FOR{$i=h$ {\bfseries to} $H$}
\STATE $I_{est}(\Xb;{Z_{\encb}}_h)_M=\frac{1}{M}\sum_{m=1}^{M}\frac{1}{2}(\log\frac{\sigma^*_h}{\sigma_h(\xb^m)})$.
 \ENDFOR
\STATE {\bfseries Calculate:}
$I_{est}(\Xb;{Z_{\encb}})_M$
 \STATE
$I_{est}(\Xb;\Zb_{\encb})_M=\sum_{h=1}^H I_{est}(\Xb;{Z_{\encb}}_h)_M.$
  \STATE {\bfseries Output: $I_{est}(\Xb;\Zb_{\encb})_M$, $I_{est}(\Xb;{Z_{\encb}}_h)_M$, $q^*(\zb)$}
\end{algorithmic}
\end{algorithm}

%\begin{corollary}The terminology follows the aforemention definitions and if the involved KL-divergence and mutual information be well defined then
%    \begin{equation}
%      \E_{\xb\sim p_{data}(\xb)}D_{KL}(q_\encb(\zb|\xb)||p_\theta(\zb))- \E_{\xb\sim p_{data}(\xb)}D_{KL}(q_\encb(\zb|\xb)||q^*(\zb))$$$$=D_{KL}(q_\encb(\zb)||p_\theta(\zb))-D_{KL}(q_\encb(\zb)||q^*(\zb))\le D_{KL}(q_\encb(\zb)||p_\theta(\zb)).
%    \end{equation}
%\end{corollary}
%The corollary is an direct result of theorem~\ref{Decomposition of Objective} and corollary~\ref{close estimation for mutual information}. It suggests that the estimation in definition~\ref{Empirical Estimation For KL(qz||pz)} is a lower bound for $D_{KL}(q_\encb(\zb)||p_\theta(\zb))$.

The following definition and theorem clarify the consistency of the estimation on mutual information.
\begin{definition}[Consistency] The estimator $I_{est}(\Xb;\Zb_{\encb})_M$ is consistent to $I(\Xb;\Zb_{\encb})$ if and only if: $\forall \varepsilon>0$ $\forall \delta>0$, $\exists N$, and $q^*(\zb)$, $\forall M>N$, with probability greater than $1-\delta$, we have
\begin{equation}\label{}
  \vert I_{est}(\Xb;\Zb_{\encb})_M -  I(\Xb;\Zb_{\encb}) \vert<\varepsilon.
\end{equation}

\end{definition}

\begin{theorem}\label{Consistency}  The estimator $I_{est}(\Xb;\Zb_{\encb})_M$ is consistent to  $I(\Xb;\Zb_{\encb})$. That is, if the choice of $q^{*}(\zb)$ satisfied the condition that $D_{KL}(q_\encb(\zb)||q^{*}(\zb))<\varepsilon/2$, then $\forall \delta>0$, $\exists N$, $\forall M>N$, with probability greater than $1-\delta$, we have
\begin{equation}\label{}
  \vert I_{est}(\Xb;\Zb_{\encb})_M -  I(\Xb;\Zb_{\encb}) \vert<\varepsilon.
\end{equation}

\end{theorem}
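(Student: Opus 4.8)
The plan is to split the estimation error into a deterministic \emph{approximation bias}, governed by how well $q^*(\zb)$ approximates $q_\encb(\zb)$, and a stochastic \emph{sampling error}, governed by a law of large numbers. The bridge between the two is Corollary~\ref{close estimation for mutual information}. Writing $\mu^*:=\E_{\xb\sim p_{data}(\xb)}D_{KL}(q_\encb(\zb|\xb)||q^*(\zb))$ for the population mean of the per-sample summand, the corollary gives
\begin{equation}
\mu^*=I(\xb;\zb_{\encb})+D_{KL}(q_\encb(\zb)||q^*(\zb)).
\end{equation}
Because the samples $\xb^m$ are drawn i.i.d.\ from $p_{data}(\xb)$, the estimator $I_{est}(\xb;\zb_{\encb})_M$ is exactly the empirical mean of the i.i.d.\ random variables $W_m:=D_{KL}(q_\encb(\zb|\xb^m)||q^*(\zb))$, each with expectation $\mu^*$. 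A triangle inequality then yields
\begin{equation}
|I_{est}(\xb;\zb_{\encb})_M - I(\xb;\zb_{\encb})|\le |I_{est}(\xb;\zb_{\encb})_M-\mu^*| + D_{KL}(q_\encb(\zb)||q^*(\zb)),
\end{equation}
which cleanly separates the sampling error (first term) from the bias (second term).

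The bias term is handled directly by the theorem's hypothesis: choosing $q^*(\zb)$ so that $D_{KL}(q_\encb(\zb)||q^*(\zb))<\varepsilon/2$ bounds it below $\varepsilon/2$. For the sampling term, I would invoke the weak law of large numbers for the i.i.d.\ sequence $W_m$: since $\E[W_m]=\mu^*$, the empirical mean $I_{est}(\xb;\zb_{\encb})_M$ converges to $\mu^*$ in probability, so for every $\delta>0$ there exists $N$ such that for all $M>N$ we have $\Pr(|I_{est}(\xb;\zb_{\encb})_M-\mu^*|\ge \varepsilon/2)<\delta$. Combining the two bounds, with probability greater than $1-\delta$ the total error is below $\varepsilon/2+\varepsilon/2=\varepsilon$, which is exactly the $(\varepsilon,\delta)$ consistency claim. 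If an explicit $N$ is wanted, one replaces the bare WLLN by Chebyshev's inequality, obtaining $N=4\,\mathrm{Var}(W_1)/(\varepsilon^2\delta)$.

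The main obstacle is not the probabilistic machinery but verifying the moment conditions that license it. Each $W_m$ is the closed-form Gaussian KL divergence given in Algorithm~\ref{alg:example}, a finite sum over $h$ of terms $\log(\sigma^*_h/\sigma_h(\xb^m))+(\sigma^2_h(\xb^m)+\mu^2_h(\xb^m))/(2\sigma_h^{*2})$. Finiteness of $\mu^*$ (hence applicability of Khinchin's WLLN) already follows from Corollary~\ref{close estimation for mutual information} whenever $I(\xb;\zb_{\encb})$ and $D_{KL}(q_\encb(\zb)||q^*(\zb))$ are finite; the quantitative Chebyshev route additionally needs $\mathrm{Var}(W_1)<\infty$. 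The delicate point is that the logarithmic term diverges if the encoder variance $\sigma_h(\xb)$ can approach zero on a set of positive $p_{data}$-measure, and the quadratic term needs $\mu_h(\xb)$ and $\sigma_h(\xb)$ to possess finite second moments under $p_{data}$. Since the encoder is a neural network with continuous outputs on the compact data support and variances bounded away from zero, these conditions hold, and I would record them as a mild regularity assumption on $q_\encb$ before applying the limit theorem.
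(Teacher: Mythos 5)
Your proof is correct and takes essentially the same route as the paper's: both split the error by the triangle inequality around the population mean $\E_{\xb\sim p_{data}(\xb)}D_{KL}(q_\encb(\zb|\xb)||q^*(\zb))$, bound the sampling term by the law of large numbers and the bias term via Corollary~\ref{close estimation for mutual information} together with the hypothesis $D_{KL}(q_\encb(\zb)||q^{*}(\zb))<\varepsilon/2$. Your additional discussion of the moment and regularity conditions needed to license the law of large numbers (and the Chebyshev route to an explicit $N$) is a refinement the paper leaves implicit, not a different argument.
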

\begin{proof}
  Let $\tilde{I}[q^*] = \E_{\xb\sim p_{data}(\xb)}D_{KL}(q_\encb(\zb|\xb)||q^*(\zb))$. According to the law of big number, we have $\forall \delta>0$, $\exists N$, $\forall M>N$, with probability greater than $1-\delta$, we have
\begin{equation}\label{}
  \vert I_{est}(\Xb;\Zb_{\encb})_M -  \tilde{I}[q^*] \vert<\varepsilon/2,
\end{equation}
\begin{eqnarray}
% \nonumber % Remove numbering (before each equation)
   && \vert I_{est}(\Xb;\Zb_{\encb})_M -  I(\Xb;\Zb_{\encb}) \vert\nonumber\\&\le&  \vert I_{est}(\Xb;\Zb_{\encb})_M -  \tilde{I}[q^*]  \vert + \vert \tilde{I}[q^*] - I(\Xb;\Zb_{\encb}) \vert  \nonumber\\ &<&  \frac{\varepsilon}{2}+\vert D_{KL}(q_\encb(\zb)||q^{*}(\zb))\vert<   \varepsilon.
\end{eqnarray}
\end{proof}
This theorem suggests the estimation under a high probability could be arbitrary close to the real mutual information  provided that the estimation $q^*(\zb)$ is arbitrary close to the learned $q_\encb(\zb)$ and the number of the sample is bigger enough. Besides, the minimization of $D_{KL}(q_\encb(\zb)||q^{*}(\zb))$ in theorem 6 inspires the derivation of $q^{*}(\zb)$.

\section{Related Work}

There are not too many works on such indicator designing issue to discover influential factors in the VAE. A general and easy approach for determine the
 VAE's factor influence  is through intuitive visual (\cite{goodfellow2016deep}, \cite{higgins2016beta}) or  aural (\cite{hsu2017unsupervised}) observation. However, it might be labor-intensive to select factors for latter tasks.

In (\cite{alemi2016deep}), $q_\encb(\zb|\xb)$ are visualized by plotting the 95\% confidence interval as an ellipse to supervise the behavior of network and  it reflects the factor influence directly. However, it still needs human to interpret the plot.

In classical PCA, it's common to select factor with high variance and (\cite{higgins2017scan}) suggests that the variance of factor may indicate the usage of the factors. However, the variance could not always represent the absolute statistical relationship between the factors and data, which can be easily observed by Fig.(\ref{fig:Iencoderzhsigma2_argumented}) and Fig.(\ref{fig:mutual information sparsity in DEAP and celeba}).

Our work emphasizes mutual information which conveys the absolute statistical relationship between the factors and the data and uses it as an indictor to find the influential factors, substantiated with the relationship of the total information of selected factors and the reconstruction and relationship of mutual information and classification. All our experiments substantiate that designed indicator can discover the influential factors significantly relevant for data representation.

\section{Experimental Results}

\subsection{Datasets}

MNIST is a database of handwritten digits  (\cite{L1998Gradient}). We estimate all mutual information of factors learned  from it and then use different ratio of top influential factors for the latter generation task.

CelebA (\cite{liu2015deep}) is a large-scale celebfaces attributes datasets and we only use its images to sustain influential factor discovery.

DEAP is a publicly famous multi-modalities emotion recognition dataset proposed by (\cite{koelstra2012DEAP}) and we use the transformed the signal -to-video sequence for 4-class emotion prediction by using different ratio of top influential factors and for emotion relevant influential factor extraction.

More details are presented in ~\ref{Experiment Details}.
\subsection{Influential Factor Discovery Tests}

According to Fig.(\ref{fig:Iencoderzhsigma2_argumented}), the proposed mutual information estimator effectively determines the influential as well as the non-influential factors. The factors with small values of estimated mutual information can be found with little generation effects and  factors with large values of mutual information can be found with influential generation effects. Comparatively, it can be observed that the variance as used in classical methods can not significantly indicate the usage of factors.

\begin{figure*}
\centering
\hspace*{-0.003\textwidth}\begin{minipage}{0.32\textwidth}
             \centering
             %\vspace*{-0.1\textwidth}
                          {\tiny Factor 7: $I(\Xb;{Z_{\encb}}_7)=3.1$,\\ $\mathop{\longrightarrow}\limits^{Background\ Brighten}$ }
        \includegraphics[width=\textwidth, clip=true, trim = 0 0 0 0]{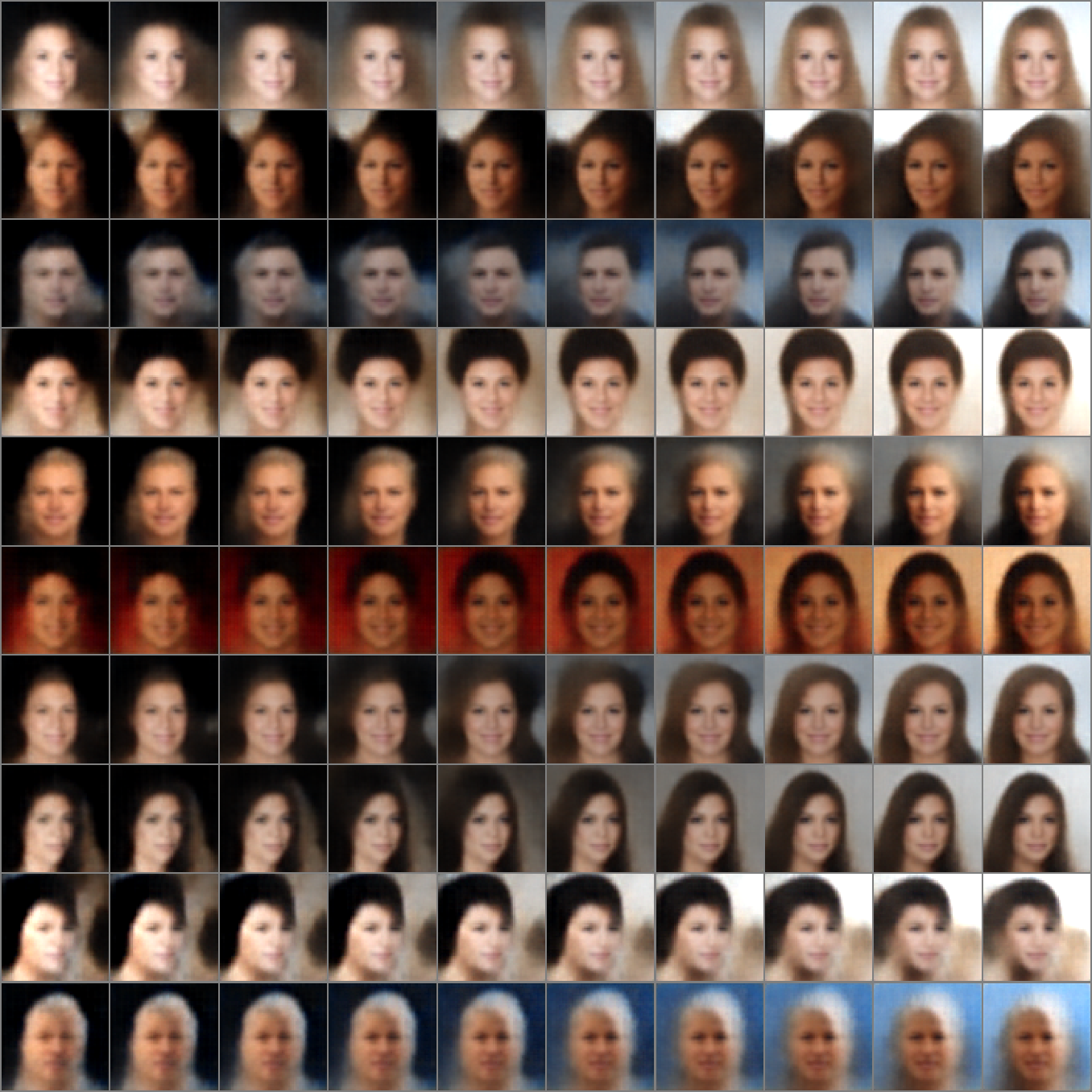}\label{celebAmog2beta40latent7}
          \end{minipage}
\hspace*{-0.003\textwidth}\begin{minipage}{0.32\textwidth}
             \centering
             %\vspace*{-0.1\textwidth}
                          {\tiny Factor 8: $I(\Xb;{Z_{\encb}}_{8})=0.8$, \\ $\mathop{\longrightarrow}\limits^{Smile}$}
        \includegraphics[width=\textwidth, clip=true, trim = 0 0 0 0]{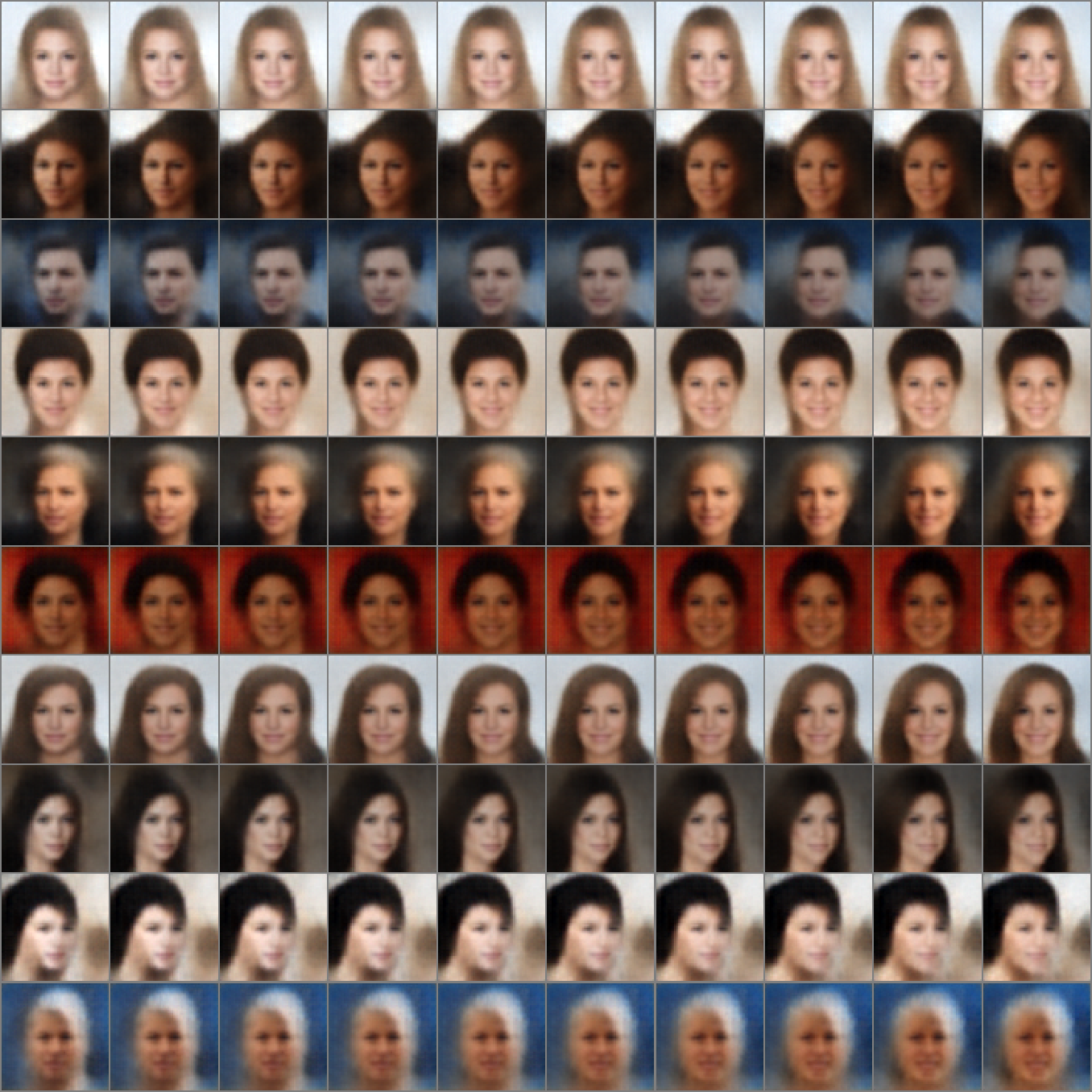}\label{celebAmog2beta40latent8}
          \end{minipage}
\hspace*{-0.003\textwidth}\begin{minipage}{0.32\textwidth}
             \centering
             %\vspace*{-0.1\textwidth}
              {\tiny Factor 12: $I(\Xb;{Z_{\encb}}_{12})=2.1$,\\
               $\mathop{\longrightarrow}\limits^{Turn \ Right}$}
        \includegraphics[width=\textwidth, clip=true, trim = 0 0 0 0]{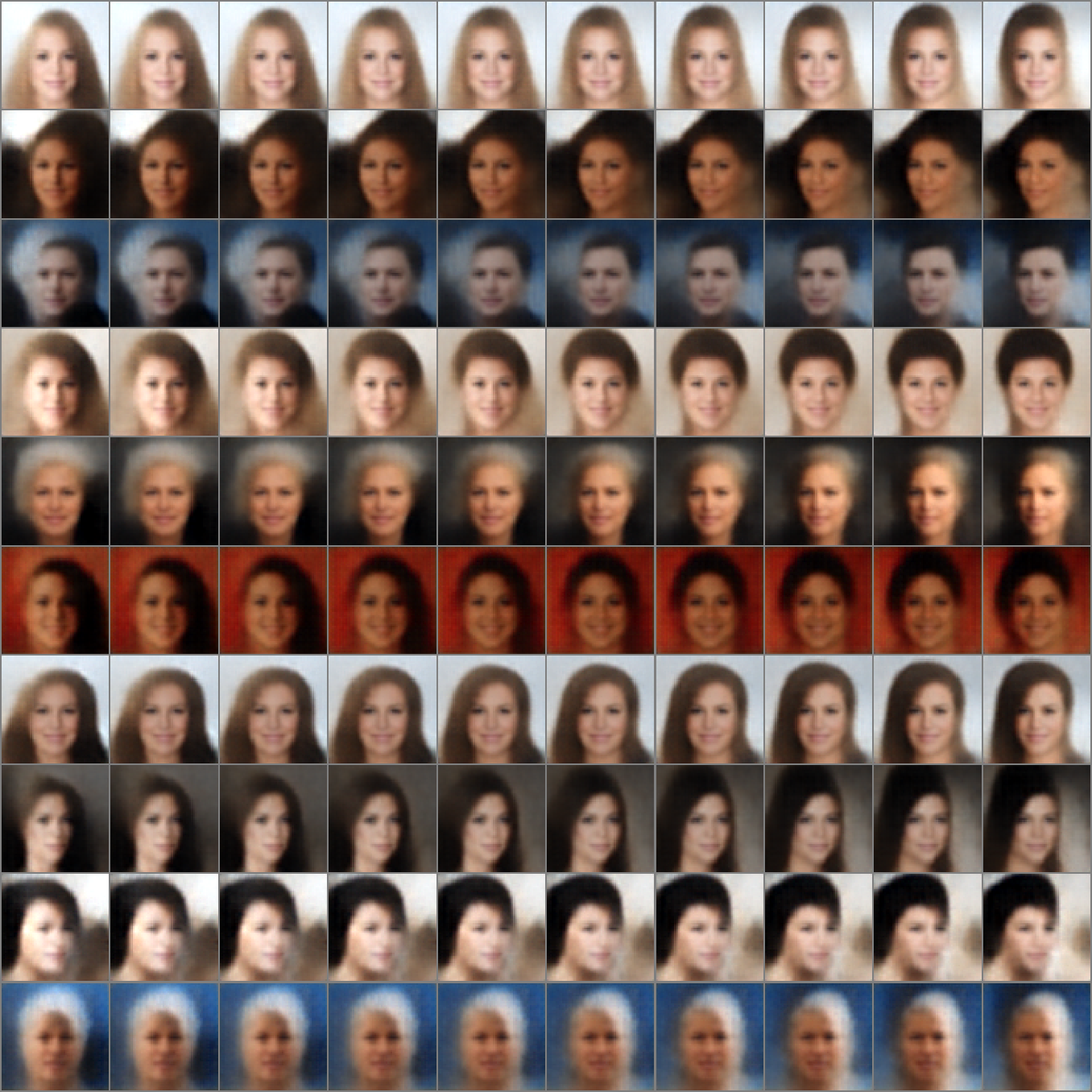}\label{celebAmog2beta40latent12}
          \end{minipage}
\caption{CelebA: Generating Factors Traversal of $\beta$(=40)-VAE. We present the first 3 influential factors determined by estimated mutual information. The whole influential factor traversals are listed in appendix~\ref{figure:FACE-BETA40-MOG2}.}
\label{figure:FACE}
\end{figure*}

In order to substantiate the validity of our mutual information estimator, we use it to automatically select influential factors with estimated $I(\Xb;{Z_{\encb}}_h)>0.5$ of CelebA shown in the Fig.(\ref{figure:FACE-BETA40-MOG2}) and many of them are possess the interpretable variants such  as background color, smile and face angle etc. This verifies that mutual information is an effective indicator to automatically determine the influential factors in the VAE setting.

\subsection{Generation Capability Test for Discovered Factors}
Estimated mutual information can instruct the latter generation task with few but influential factors. We select the different ratios of the top influential factors according to the quantity of the mutual information to generate the later image. The factors are sorted according to the values of its mutual information indicators and the other non-influential factors estimated by the indicator are constantly set to zero in the generating process.
\begin{figure}
  \centering
  \includegraphics[width=0.5\textwidth,clip=true, trim= 0 240 0 200 ]{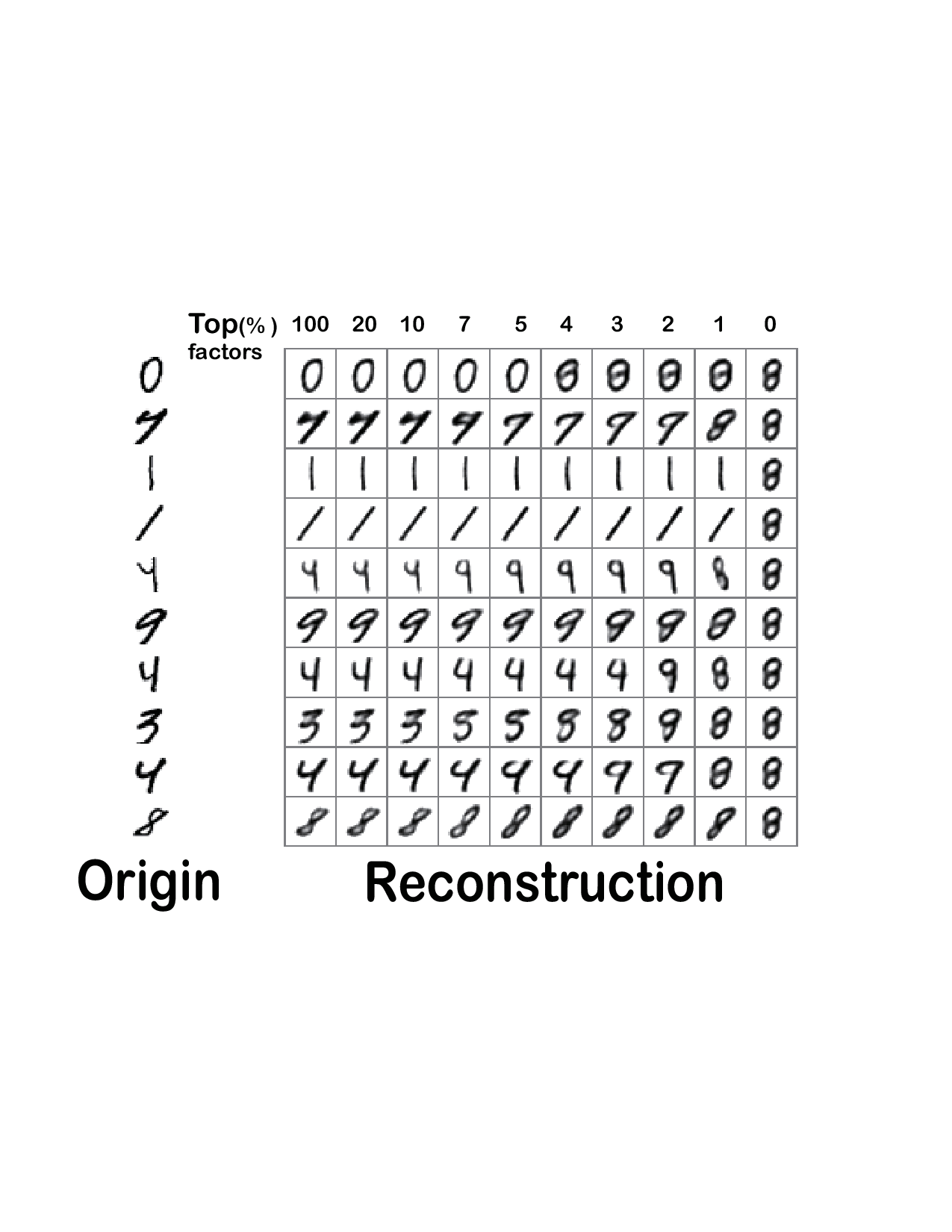}
  \caption{Generation plot with different ratio of factors.}\label{fig:ratio generation}
\end{figure}
According to Fig.(\ref{fig:ratio generation}), we can find that by on using $10\%$ of the top influential factors discovered by the proposed algorithm, the VAE model can still generate images almost similar to the one reconstructed by using whole factors.

Table~\ref{mutual information and reconstruction error plot} shows the detailed total information and the reconstruction error corresponding to the different ratio of factor. The top $10\%$ factors contain almost the whole information and therefore their reconstructions have the almost the same reconstruct error compared   to using all the factors. As suggested by the information and reconstruction relationship, the less information is contained in the used factors, the higher minimum reconstruction loss bound is raised.
\begin{table*}
\caption{mutual information and reconstruction error plot}\label{mutual information and reconstruction error plot}
\small
\begin{center}
\begin{tabular}{c c c c c c c c c c c}
  \hline
 Top(\%) factors in used & 100 &20 & 10 &7 & 5 & 4 & 3&2 &1 &0 \\ \hline
   $I(\Xb;\Zb_{\encb_{used}})$& 24.3&24.3	&24.3	&19.6	&16.5	&14.7	& 10.6	&8.4	&5.8&	0  \\ \hline
   mean square error& 5.6&5.6	&5.6	& 	13.4	&15.0&	18.9&	27.6	&31.3	&44.4&	71.7
  \\ \hline
\end{tabular}
\end{center}\normalsize
\end{table*}

\subsection{Classification Capability Test by Discovered Factors}
Estimated mutual information  can instruct the latter classification task with few but influential factors. We select the different ratio of the top influential factors according to the quantity of the mutual information to predict emotions. The factors are sorted according to its mutual information and the estimated non-influential factors are constantly set to zero in the prediction procedure.

\begin{table*}
\caption{Mutual information and EEG-emotion classification with $\beta(=6)$-VAE}
\begin{center}\small
\begin{tabular}{c c c c c c c c c c c}
  \hline
 Top(\%) factors in used & 100 &50 & 10 &7 & 5 & 4 & 3&2 &1 &0 \\ \hline
   $I(\Xb;\Zb_{\encb_{used}})$& 53.8&53.5	&38.3	&28.0	&22.5	&19.6	& 13.5	&10.2	&7.0&	0  \\ \hline
   mean test accuracy& 0.53&0.52	&0.46	& 	0.32	&0.34&	0.36& 0.29&	0.29	&0.3	&0.23
  \\ \hline
\end{tabular}
\end{center}\normalsize
\label{MI-EEG-Classfication}
\end{table*}

According to Table~\ref{MI-EEG-Classfication}, by only using half of the factors, the model still possesses the similar prediction accuracy. Besides, the estimated mutual information on the other side also helps us to determine the several variants which are relevant with the emotion classification as shown in the following Fig.(\ref{fig:3-emotion relevant factors discovery}).

\begin{figure*}
\centering
\hspace*{-0.003\textwidth}\begin{minipage}{0.32\textwidth}
             \centering
             %\vspace*{-0.1\textwidth}
                          {\tiny Factor 1: $I(\Xb;{Z_{\encb}}_1)=2.1$,\\ C4-T8-P4-P8 \ Block,\\ $\mathop{\longrightarrow}\limits^{ \ Red \ Turn \ Green}$ }
        \includegraphics[width=\textwidth, clip=true, trim = 0 0 0 0]{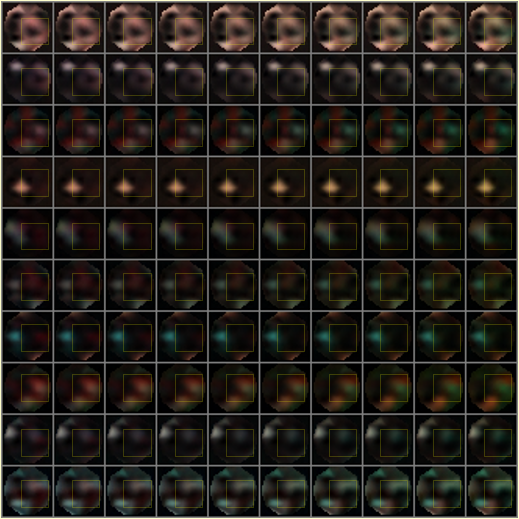}\label{EEGbeta6latent1}
          \end{minipage}
\hspace*{-0.003\textwidth}\begin{minipage}{0.32\textwidth}
             \centering
             %\vspace*{-0.1\textwidth}
                          {\tiny Factor 26: $I(\Xb;{Z_{\encb}}_{104})=3.2$,\\
PO3-O1-Oz \& Fp1-Fp2-AF4-AF3\  Block, \\$\mathop{\longrightarrow}\limits^{\ Turn \ Dark}$ }
        \includegraphics[width=\textwidth, clip=true, trim = 0 0 0 0]{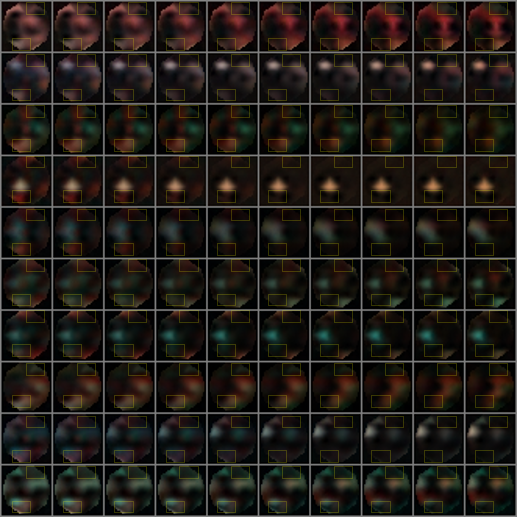}\label{EEGbeta6latent104}
          \end{minipage}
\hspace*{-0.003\textwidth}\begin{minipage}{0.32\textwidth}
             \centering
             %\vspace*{-0.1\textwidth}
{\tiny Factor 31: $I(\Xb;{Z_{\encb}}_{113})=3.7$,\\ PO3-O1-Oz \ Block, \\
                           $\mathop{\longrightarrow}\limits^{ Turn \ Green}$}
        \includegraphics[width=\textwidth, clip=true, trim = 0 0 0 0]{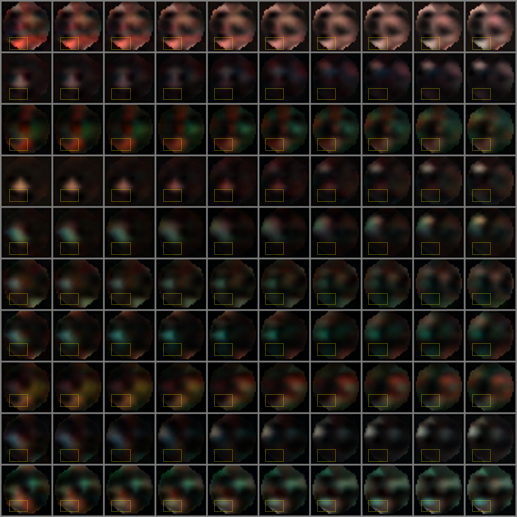}\label{EEGbeta6latent113}
          \end{minipage}
\caption{ Emotion relevant factors discovery. We present 3 influential factors determined by estimated mutual information. The whole influential factor traversals are listed in Fig.(\ref{fig:discover emotional relevant factor}) in appendix.}\label{fig:3-emotion relevant factors discovery}
\end{figure*}

\section{Conclusion}
This paper explains the necessity of using mutual information of the input data and each factor as the indicator to estimate the intrinsic influence of a factor to represent data in the VAE model. The mutual information reflects the absolute statistical dependence. The second term in the VAE objective and excess pre-set factors %over the data intrinsic dimension
 inclines to induce the mutual information sparsity and helps achieve influential, as well as ignored, factors in the VAE. We have also proved that the mutual information also involves in the lower bound of the mean square error of the reconstruction and of prediction error of the classification. We design a feasible algorithm to calculate the indicator for estimating the mutual information for all factors in the VAE and proves its consistency. The experiments show that  both the influential factors and non-influential factors can be automatically and effectively found. The interpretability of the  discovered  factors is substantiated intuitively, and the generalization and classification capability on these factors have also been verified. Specially, some variants relevant to  classification  are found. The experiments also inspire the idea that we can using a small amount of top influential factors for the latter data processing tasks including generation and classification by still keeping the performance of all factors, just similar to the dimensionality reduction capability as classical PCA, ICA and so on.

The VAE combined with mutual information  indicator helps automatically find vairiants and extract knowledge under the data and it can be applied to various variants of the VAE including  $\beta$-VAE (\cite{higgins2016beta}), FactorVAE (\cite{kim2018disentangling}), $\beta$-TCVAE (\cite{chen2018isolating}) and DIP-VAE (\cite{kumar2017variational}). It may be beneficial to extensive latter applications including blind source separation, interpretable feature learning, information bottleneck and data bias elimination. We will investigate these issues in our future research.

\section{Acknowledgments}
We would like to thank Zilu Ma and Tao Yu for discussing the information conservation theorems. We would like to thank Lingjiang Xie and Rui Qin for EEG data processing.

\section*{References}

\bibliography{reference}

\appendix
\onecolumn
\newpage

\section{Experiment Details\label{Experiment Details}}
\subsection{MNIST}
We split  7000 data points by ratio $[0.6:0.2:0.2]$ into training, validation, testing set.
The estimated mutual information and $q^*(\zb)$ are calculated on 10000 data points in the testing set. Seed images from the testing set are used to infer factor value and draw the traversal.

In traversal figures, each block corresponds to the traversal of a single factor over the $[-3, 3]$ range while keeping others fixed to their inferred (by $\beta$-VAE, VAE). Each row is generated with a different seed image.

The $\beta$ setting for $\beta$-VAE is enumerated from $[0.1,0.5,1,2:2:18]$.
\subsection{CelebA}
We split randomly roughly 200000 data points by ratio $[0.8:0.1:0.1]$ into training, validation (no use), testing set.

The estimated mutual information and $q^*(\zb)$ are calculated on 10000 data points in the testing set. Seed images from the testing set are used to infer factor value and draw the traversal.

In traversal figures, each block corresponds to the traversal of a single factor over the $[-3, 3]$ range while keeping others fixed to their inferred (by $\beta$-VAE, VAE). Each row is generated with a different seed image.

The $\beta$ setting for $\beta$-VAE is enumerated from $[1,30,40]$.
\subsection{DEAP}
DEAP is a well-known public multi-modalities (e.g. EEG, video, etc.) dataset proposed by \cite{koelstra2012DEAP}. The EEG signals are recorded from 32 channels by 32 participants watching 40 videos for 63 seconds each. The EEG data was preprocessed which down-sampling into 128Hz and band range 4-45 Hz. By the same transformation idea from \cite{bashivan2015learning}, we applied fast Fourier transform (FFT) on 1-second EEG signal and convert it to an image. In this experiment, alpha (8-13Hz), beta (13-30Hz) and gamma (30-45Hz) are extracted as the frequency band which represented the activities related to brain emotion emerging. The next step is similar as \cite{bashivan2015learning} work which mentioned in section II [PLEASE CHECK IT IN THE PAPER], by Azimuthal Equidistant Projection (AEP) and Clough-Tocher scheme resulting in three 32x32 size topographical activity maps corresponding to each frequency bands shown as RGB plot. The transformation work conduct the total of 1280 EEG videos where each has 63 frames. The two emotional dimensions are arousal and valence, which were labeled from the scale 1-9. For each of them, we applied 5 as the boundary for separating high and low level to generate 4 classes (e.g. high-arousal (HA), high-valence (HV), low-arousal (LA) and low-valence(LV)). In this paper we perform this 4-class classification task as same as the one in [baseline paper].

We split randomly roughly 1280 samples by ratio [0.8: 0.1: 0.1] into training, validation, testing set. $\beta$(= 6)-VAE is trained on each frame and LSTM was used to combine all the frames together for each video.

The estimated mutual information is calculated on 100*63 imagewise(100 videos) data points in the testing set. Seed images from the testing set are used to infer factor value and draw the traversal.

In traversal figures, each block corresponds to the traversal of a single factor over the $[-3, 3]$ range while keeping others fixed to their inferred (by $\beta$-VAE). Each row is generated with a different seed image.

%\begin{figure}[t]
%  \centering
%  \includegraphics[width=\textwidth]{figure/generatortraversal/EEGbeta6/Original.png}
%  \caption{Test set unfolded transformed images}
%   \label{fig:DEAP Test set unfolded transformed image}
%\end{figure}

\subsection{Network Structure}
\begin{center}
\small
  \begin{tabu}{c|c|c|c}
  \hline
  \textbf{Dataset} & \textbf{Optimiser}& \multicolumn{2}{c}{\textbf{Architecture}} \\ \hline
  \multirow{6}{*}{MNIST}& Adam & Input & 28x28x1  \\& $1e-3$ & Encoder & Conv 32x4x4,32x4x4 (stride 2).  \\ & & &FC 256. ReLU activation. \\&Epoch 200  & Latents &  128 \\&  & Decoder & FC 256. Linear. Deconv reverse of encoder. \\ & & &ReLU activation. Gaussian.  \\ \hline

  \multirow{6}{*}{CelebA}& Adam & Input & 64x64x3  \\& $1e-4$ & Encoder & Conv 32x4x4,32x4x4,64x4x4,64x4x4 (stride 2).  \\ & & &FC 256. ReLU activation. \\&Epoch 20  & Latents &  128/32 \\&  & Decoder & FC 256. Linear. Deconv reverse of encoder.  \\ & & & ReLU activation. Mixture of 2-Gaussian.\\ \hline
    \multirow{6}{*}{DEAP}& Adam & Input & 32x32x3  \\& $1e-4$ & Encoder & Conv 32x4x4,32x4x4,64x4x4,64x4x4 (stride 2).  \\ & & &FC 256. ReLU activation. \\&Epoch 300  & Latents &  128/32 \\&  & Decoder & FC 256. Linear. Deconv reverse of encoder.  \\ & &  & ReLU activation. Gaussian.\\
        &  & Input & 63x128  \\&  &  Recurrent & LSTM dim128. Time-Step 63. \\ & &  Predictor &  FC 4. ReLU activation.\\ \hline
%    \multirow{6}{*}{Extended Yale Face B}& Adam & Input & 192x168x1  \\& $1e-4$ & Encoder & Conv 32x4x4,32x4x4,64x4x4,64x4x4 (stride 2).\\ &Epoch 2002 & &FC 256. ReLU activation.  \\&  & Latents &  128 \\&  & Decoder & FC 256. Linear. Deconv reverse of encoder. \\ & & &  ReLU activation. \\ \hline
%      \multirow{6}{*}{Extended Yale Face B}& Adam & Input & 192x168x1  \\& $1e-4$ & Encoder & Conv 32x4x4,32x4x4,64x4x4,64x4x4 (stride 2).\\ &Epoch 1460 & &FC 256. ReLU activation.  \\&  & Latents &  128 \\&  & Decoder & FC 256. Linear. Deconv reverse of encoder. \\ (Network Parameterized Noise)& & &  ReLU activation. \\ \hline
\end{tabu}
\end{center}
\subsection{Experiment Plot}
In the following subsection, we present the influential factor ($I(\Xb;{Z_{\encb}}_{h})>0.7$) traversals, mutual information and variance plot of different data sets.

\begin{figure}
\centering
\foreach \t in {7,8,
12,13,19,26,28,%29,
31,37,40,45,48,
56,63,%65,
73,77,81,82,88,90,96,102,110,118,120}{
                \hspace*{-0.03\textwidth}  \begin{minipage}{0.21\textwidth}
             \centering
             %\vspace*{-0.1\textwidth}
                 {\tiny Factor \t}
          \includegraphics[width=0.9\textwidth, clip=true, trim = 0 0 0 0]{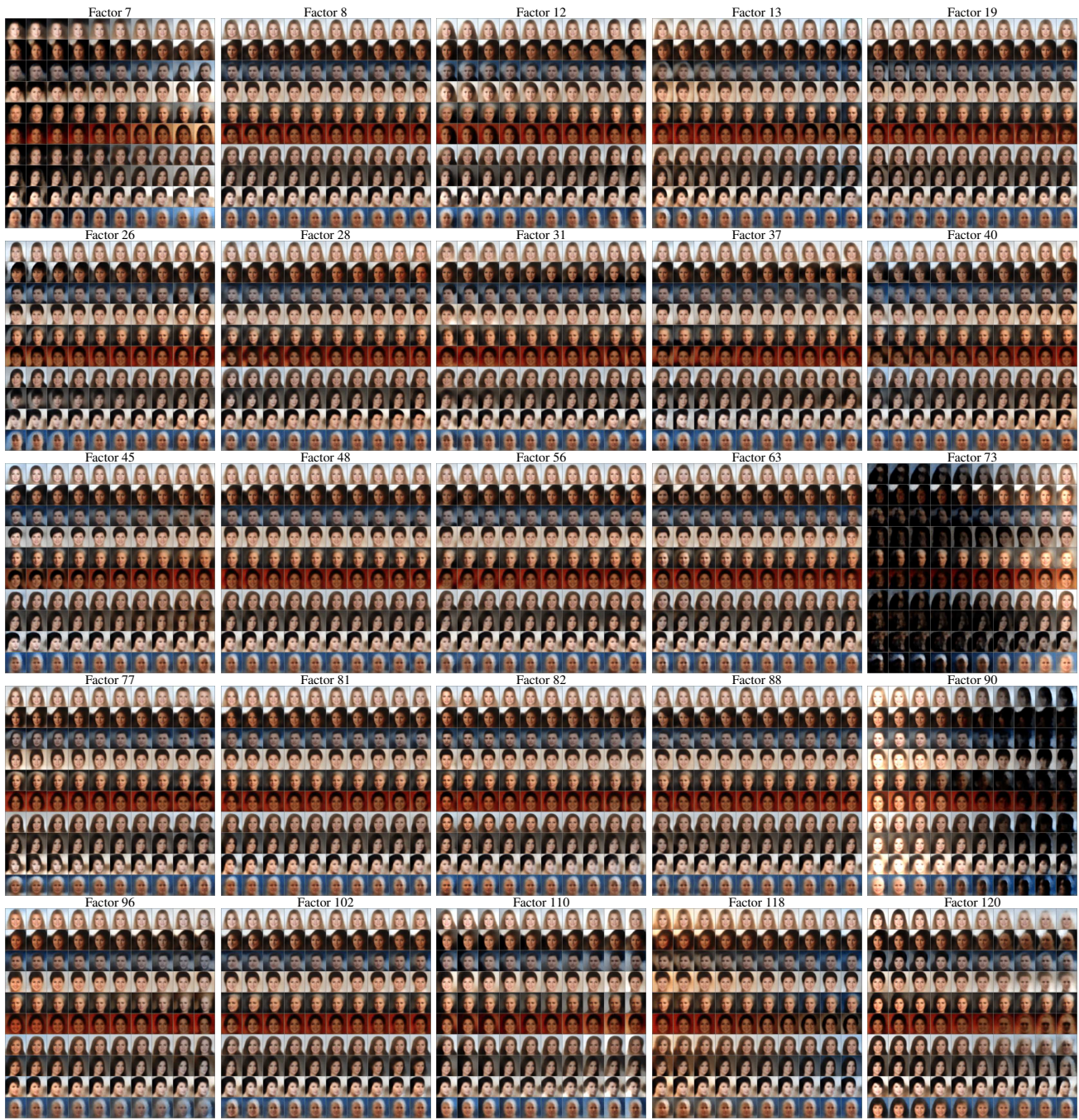}\label{fig:celebAmog2beta40latent\t}
          \end{minipage}
}
 \includegraphics[width=\textwidth]{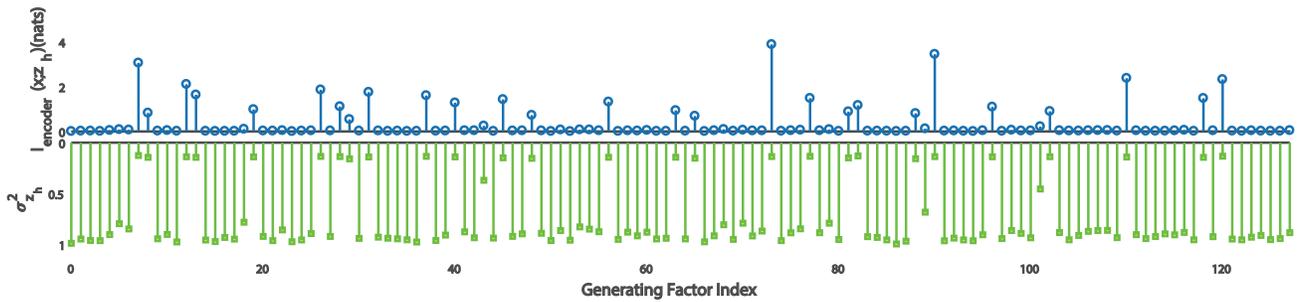}
\caption{\textbf{Mutual Information Sparsity in CelebA}: Generating Factors Traversal of $\beta$(=40)-VAE}
\label{figure:FACE-BETA40-MOG2}
\end{figure}

\begin{figure}
\centering
\foreach \t in {7,22,23,26,30,36,40,47,64,81,91,103,104}{
\begin{minipage}{0.19\textwidth}
              \centering
                  {\tiny \textbf{Factor} \t}
          \includegraphics[width=\textwidth]{figure/generatortraversal/mnistgamma10/latent\t.png}\label{MNISTbetalatent\t}
          \end{minipage}}
          \includegraphics[width=\textwidth]{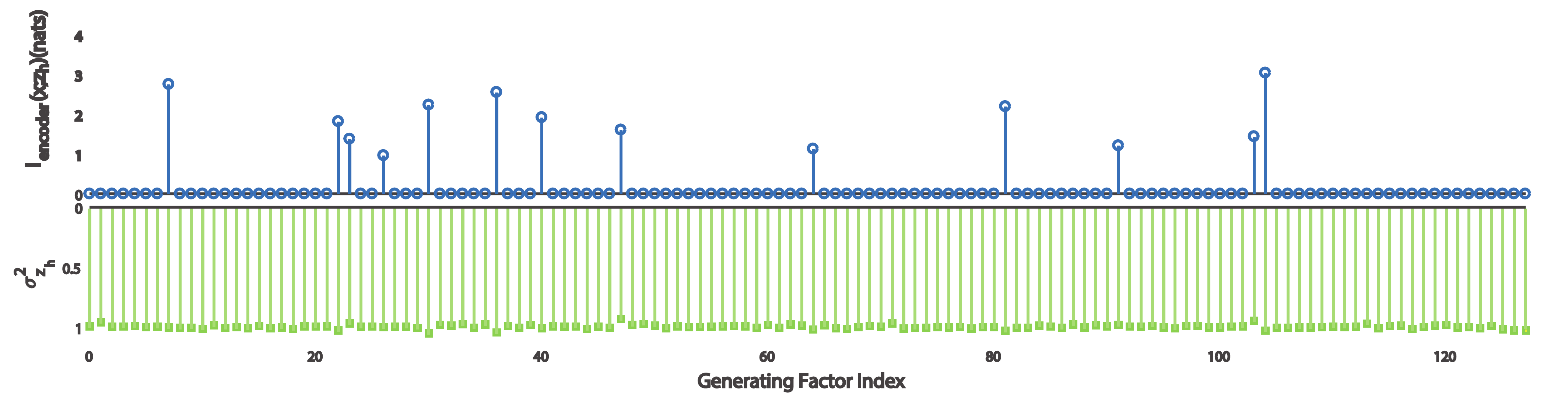}
\caption{\textbf{Mutual Information Sparsity in MNIST}: Generating Factor Traversal of $\beta$(=10)-VAE \label{fig:traservel of beta10MNIST}}
\end{figure}

\begin{figure}
\centering
\foreach \t in {1,26,31,36,40,59,60,61,64,65,69,83,86,88,100,104,113,114,116,117}{
\begin{minipage}{0.23\textwidth}
              \centering
                     {\tiny \textbf{Factor} \t   }
          \includegraphics[width=\textwidth, clip=true, trim = 0 0 0 0]{figure/generatortraversal/EEGbeta6/latent\t.png}\label{DEAPlatent\t}
          \end{minipage}    }
\includegraphics[width=\textwidth]{DEAPinfoplot.pdf}
\caption{\textbf{Mutual Information Sparsity in DEAP}: Generating Factor Traversal of $\beta$(=6)-VAE \label{fig:discover emotional relevant factor}}
\end{figure}

\end{document}